\documentclass{article}

\usepackage{microtype}
\usepackage{graphicx}
\usepackage{subfigure}
\usepackage{booktabs} 
\usepackage{graphicx}
\usepackage{subcaption}
\usepackage{multirow}
\usepackage[utf8]{inputenc}
\usepackage[T1]{fontenc}
\usepackage{comment}
\usepackage[most]{tcolorbox}

\usepackage{hyperref}

\usepackage[accepted]{icml2026}

\usepackage{amsmath}
\usepackage{amssymb}
\usepackage{mathtools}
\usepackage{amsthm}

\usepackage[capitalize,noabbrev]{cleveref}

\theoremstyle{plain}
\newtheorem{theorem}{Theorem}[section]

\newtheorem{lemma}[theorem]{Lemma}
\newtheorem{corollary}[theorem]{Corollary}
\theoremstyle{definition}

\theoremstyle{remark}
\newtheorem{remark}[theorem]{Remark}

\newcommand{\RMV}[1] 	  {}

\newcounter{marginNoteCounter}

\definecolor{gold}{RGB}{192,192,51}
\definecolor{silver}{RGB}{149,149,149}
\definecolor{bronze}{RGB}{205,127,50}

\icmltitlerunning{Beyond ReLU: Bifurcation, Oversmoothing, and Topological Priors}

\begin{document}

\twocolumn[
\icmltitle{Beyond ReLU: Bifurcation, Oversmoothing, and Topological Priors}



\icmlsetsymbol{equal}{*}

\begin{icmlauthorlist}
\icmlauthor{Erkan Turan}{yyy}
\icmlauthor{Gaspard Abel}{ps,ehess}
\icmlauthor{Maysam Behmanesh}{yyy}
\icmlauthor{Emery Pierson}{yyy}
\icmlauthor{Maks Ovsjanikov}{yyy}


\end{icmlauthorlist}

\icmlaffiliation{yyy}{LIX, Ecole Polytechnique, IP Paris}
\icmlaffiliation{ps}{Université Paris Saclay, Université Paris Cité, ENS Paris Saclay, CNRS, SSA, INSERM, Centre Borelli, F-91190, Gif-sur- Yvette, France}

\icmlaffiliation{ehess}{Centre d’Analyse et de Mathématique Sociales, EHESS, CNRS, 75006 Paris, France.}

\icmlcorrespondingauthor{Erkan Turan}{turan@lix.polytechnique.fr}


\icmlkeywords{Machine Learning, ICML}

\vskip 0.3in
]



\printAffiliationsAndNotice{} 


\begin{abstract}
Graph Neural Networks (GNNs) learn node representations through iterative network-based message-passing. While powerful, deep GNNs suffer from oversmoothing, where node features converge to a homogeneous, non-informative state. We re-frame this problem of representational collapse from a \emph{bifurcation theory} perspective, characterizing oversmoothing as convergence to a stable ``homogeneous fixed point.'' Our central contribution is the theoretical discovery that this undesired stability can be broken by replacing standard monotone activations (e.g., ReLU) with a class of functions. Using Lyapunov-Schmidt reduction, we analytically prove that this substitution induces a bifurcation that destabilizes the homogeneous state and creates a new pair of stable, non-homogeneous \emph{patterns} that provably resist oversmoothing. Our theory predicts a precise, nontrivial scaling law for the amplitude of these emergent patterns, which we quantitatively validate in experiments. Finally, we demonstrate the practical utility of our theory by deriving a closed-form, bifurcation-aware initialization and showing its utility in real benchmark experiments.
\end{abstract}
\vspace{-2em}
\section{Introduction}

Graph Neural Networks (GNNs) have emerged as a powerful class of models for relational data, with architectures like Graph Convolutional Networks and Graph Attention Networks achieving strong results in many applications~\citep{kipf2017semi,velivckovic2018graph,hamilton2017inductive}. Despite this promise, GNNs suffer from a fundamental limitation known as \emph{oversmoothing}: as network depth increases, node features converge to increasingly similar representations~\citep{li2018deeper,oono2020graph,chen2020measuring}. In practice, this restricts most architectures to just 2--4 layers, a stark contrast to the very deep networks that have proven effective for images and text. This depth limitation is particularly problematic for tasks requiring long-range communication, since most GNN architectures exchange information only between neighboring nodes~\citep{alon2020bottleneck}.

The research community has invested considerable effort into both practical mitigations and theoretical understanding of oversmoothing~\citep{rusch2023survey}. Practical solutions include residual connections~\citep{li2019deepgcns} and normalization techniques~\citep{zhao2020pairnorm}, while theoretical analyses have drawn on spectral theory~\citep{oono2020graph} and, more recently, dynamical systems perspectives such as contractive mappings~\citep{arroyo2025vanishinggradientsoversmoothingoversquashing} and ergodic theory~\cite{SGOS-10.1145}. However, existing theoretical investigations place little emphasis on the role of activation functions, and when they do, the analysis is typically restricted to standard choices such as ReLU. A separate line of work has focused on continuous-time formulations~\citep{chamberlain2021grand,behmanesh2023tide}, viewing GNN updates as discretizations of underlying differential equations and drawing on reaction--diffusion models from physics~\citep{pmlr-v202-choi23a}.

In this work~\footnote{The code used for this paper can be found here: https://github.com/maysambehmanesh/relu-bifurcation}, we step back from continuous-time variants and study classical discrete message-passing architectures. We investigate oversmoothing through the lens of bifurcation theory, revealing a previously unappreciated connection to the choice of activation function coupled with the role of the weight initialization during training. Motivated by our analytical derivations, we propose a simple fix: substituting the common ReLU activation with a class of activations our theory identifies, odd functions with a stabilizing cubic term, such as the non-monotonic Sine. This substitution alone, with our derived bifurcation-aware initialization scheme, provably prevents collapse to the trivial fixed point, destabilizing the homogeneous state and stabilizing informative, non-homogeneous fixed points. A second, independent lever, polynomial spectral filtering, then selects which topological mode this prior aligns with, yielding competitive performance against state-of-the-art GNN architectures.



Our contributions are as follows:
\begin{itemize}
    \item \textbf{Bifurcation analysis of activation functions.} We establish that oversmoothing can be overcome by suitable classes of  activation functions (Theorem~\ref{thm:pitchfork_general}). We show that the choice of activation instantiates a particular dynamical system and characterize function classes that promote bifurcating behavior (pitchfork or transcritical). We rigorously prove that these activations destabilize the homogeneous fixed point and derive non-trivial, testable scaling laws for the amplitude of emergent patterns.
    
    \item \textbf{Graph-aware initialization.} By combining our bifurcation analysis with random matrix theory, we derive closed-form initialization formulas that position GNNs precisely at the critical point (Corollary~4.2). This provides a principled, graph-aware alternative to standard variance-preserving initialization schemes.
    
    \item \textbf{Empirical validation.} Through experiments on both a controlled toy model and realistic GNN architectures, we validate our theoretical predictions for pattern amplitude scaling across diverse graph topologies. On node classification benchmarks, 64-layer GNNs initialized according to our scheme achieve  performance, with phase transitions occurring near the theoretically predicted parameter values.
\end{itemize}
\vspace{-1.em}

In summary, our bifurcation-theoretic perspective reveals that oversmoothing is not inevitable, but rather a consequence of specific dynamical properties that can be engineered away through principled activation and initialization choices.

\begin{figure*}[t]
    \centering
    \includegraphics[width=0.80\textwidth]{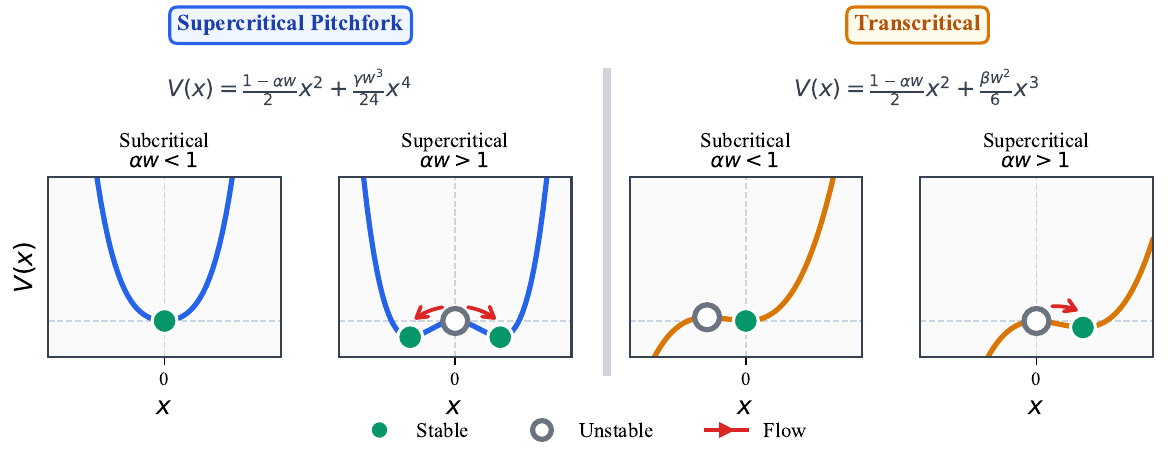}
    \vspace{-1em}
    \caption{\textbf{Effective potential landscape across bifurcation.}
    \textit{Left:}
    Supercritical Pitchfork bifurcation.
    \textit{Right:} Transcritical bifurcation.
    For both bifurcation types, below the critical coupling $w<1$, the system is at the subcritical regime -- the homogeneous state $a=0$ is the unique stable minimum, corresponding to oversmoothing. Above $w>1$, the origin becomes unstable (hollow circle) and stable minima emerge (filled circles), representing non-homogeneous pattern formation that resists oversmoothing. 
    }
    \label{fig:effective_potential}
\end{figure*}

\section{Related Work}
\label{sec:related}


\paragraph{Oversmoothing as a spectral property in GNNs.}
The phenomenon of oversmoothing was theoretically investigated by \citet{li2018deeper,oono2020graph,chen2020measuring}, who showed that deep GNNs lose discriminative power as node features converge. From a spectral view, message-passing is usually described as a filter~\citep{nt2019revisiting,wu2019simplifying}, progressively smoothing features toward the graph's dominant eigenvector. Recent work formulate this collapse towards homogeneous states using operator semi-group theory \cite{SGOS-10.1145}, or vanishing gradients \cite{arroyo2025vanishinggradientsoversmoothingoversquashing}, and provide results on its exponential convergence.
In parallel, a multitude of architectural solutions have been developed to mitigate oversmoothing in GNNs, including residual connections~\citep{scholkemper2025residual, li2019deepgcns}, normalization~\citep{zhao2020pairnorm} and graph rewiring~\citep{topping2022understanding,karhadkar2023fosr}.
\vspace{-1em}

\paragraph{A physical perspective on GNNs.}
Recent work has also drawn inspiration from physics to address oversmoothing. An energy-based formulation of iterative graph convolutions was used in \cite{giovanniUnderstandingConvolutionGraphs2023a} to analyze GNN convergence properties, as well as a dynamical system approach in their attention-based variants in \citet{wu2023demystifying}. The continuous-time limit formulation of GNN layers has given birth to a multitude of GNN architectures: GREAD~\citep{pmlr-v202-choi23a}, ACMP~\citep{choi2023acmp} and RDGNN~\citep{eliasof2024graph} formulate GNNs as reaction-diffusion PDEs, to enable pattern formation through learnable reaction terms. Other architectures such as (GraphCON) \citet{rusch2022graphcon}, KuramotoGNN \citet{nguyen2024kuramoto} and Stuart-Landau GNN \cite{zhang2026stuart} model node features as a system of nonlinear coupled oscillators, demonstrating that oscillatory dynamics can prevent the convergence to trivial steady states.
Complementary to such dedicated architectures, we provide bifurcation-theoretic foundations to rigorously examine GNN design and study the emergence of these expressive patterns.
\vspace{-1em}

\paragraph{Activation functions and initialization.}
Despite its well-established development in computer vision with PReLU~\citep{he2015delving} or SIREN~\citep{sitzmann2020implicit} for implicit neural representations, the role of activations in GNNs has not been thoroughly investigated beyond empirical tests~\citep{computation12110212}.
We provide a theoretical basis for activation selection based on dynamical stability and rigorously characterize which properties of activation functions induce stable pattern-forming bifurcations. For initialization, while recent graph-oriented methods~\citep{goto2024initialization} use variance analysis, our bifurcation-aware approach initializes GNNs at critical points which, as we show, combined with topological mode selection~\cite{jacotNTK2018}, instantiate GNN training towards expressive representations.


\section{A Dynamical System Perspective for Oversmoothing}
\label{main:dynsys}
\subsection{Oversmoothing as Fixed-Point Convergence}
\label{main:toy}

Oversmoothing in GNNs is commonly interpreted through a spectral lens: repeated
message passing acts as a low-pass filter on node
features~\citep{nt2019revisiting,wu2019simplifying}.
More recently, this phenomenon has been reframed from a dynamical system perspective ~\citep{SGOS-10.1145,arroyo2025vanishinggradientsoversmoothingoversquashing}. In this view, a deep GNN defines a discrete dynamical system 

\vspace{-5mm}
\begin{align}
    x^{(\ell+1)} = f(x^{(\ell)}),
\end{align}
where oversmoothing corresponds to convergence toward a stable
\emph{homogeneous fixed point} of the layer-wise node representations $(x^{(\ell)})_{\ell=1,...L}$.  The Banach fixed-point theorem ensures the convergence of the system to a unique attractor.

\begin{lemma}
\label{lem:usefullemma}
Let $f$ be an operator with Lipschitz constant $\|f\|_{\mathrm{Lip}} < 1$. Then, for
any initial condition $x_0$, iteration $x^{(\ell+1)} = f(x^{(\ell)})$ converges linearly to
the unique fixed point of $f$.
\end{lemma}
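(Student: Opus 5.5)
This is the Banach fixed-point theorem, and I would prove it directly by the standard contraction argument. Throughout, the underlying space is the finite-dimensional feature space $\mathbb{R}^{n\times d}$ with a fixed norm $\|\cdot\|$, so it is complete. Set $q := \|f\|_{\mathrm{Lip}} < 1$, so that $\|f(x)-f(y)\| \le q\|x-y\|$ for all $x,y$.

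\textbf{Step 1 (successive differences).} Fix $x_0$ and write $x^{(\ell)} = f^{\ell}(x_0)$. Induction on $\ell$ gives
\[
\|x^{(\ell+1)}-x^{(\ell)}\| \le q^{\ell}\|x^{(1)}-x^{(0)}\|.
\]

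\textbf{Step 2 (Cauchy sequence).} For $m>\ell$, the triangle inequality and a geometric sum give
\[
\|x^{(m)}-x^{(\ell)}\| \le \sum_{k=\ell}^{m-1} q^{k}\|x^{(1)}-x^{(0)}\| \le \frac{q^{\ell}}{1-q}\|x^{(1)}-x^{(0)}\|.
\]
Hence $(x^{(\ell)})$ is Cauchy. By completeness it converges to some $x^\ast$.

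\textbf{Step 3 (the limit is the unique fixed point).} Since $f$ is Lipschitz, it is continuous, so passing to the limit in $x^{(\ell+1)} = f(x^{(\ell)})$ gives $f(x^\ast)=x^\ast$. For uniqueness, suppose $y^\ast$ is another fixed point. Then $\|x^\ast-y^\ast\| = \|f(x^\ast)-f(y^\ast)\| \le q\|x^\ast-y^\ast\|$, and since $q<1$ this forces $x^\ast=y^\ast$. Consequently the limit does not depend on $x_0$.

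\textbf{Step 4 (linear rate).} I would use the direct bound
\[
\|x^{(\ell)}-x^\ast\| = \|f(x^{(\ell-1)})-f(x^\ast)\| \le q\|x^{(\ell-1)}-x^\ast\| \le q^{\ell}\|x_0-x^\ast\|,
\]
which is exactly linear convergence with rate $q$. Alternatively, letting $m\to\infty$ in Step 2 gives the a priori bound $\frac{q^{\ell}}{1-q}\|x^{(1)}-x^{(0)}\|$.

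There is no real obstacle here. The only point needing care is to state the ambient complete metric space explicitly, which is automatic in finite dimensions, and to interpret $\|f\|_{\mathrm{Lip}}$ with respect to that same norm.
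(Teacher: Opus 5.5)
Your proof is correct. It is the standard Banach contraction argument, which is exactly what the paper relies on: the paper gives no separate proof and simply cites the Banach fixed-point theorem, and you have written that argument out in full, with completeness supplied by the finite-dimensional feature space.
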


For GNN architectures with standard activations (e.g. ReLU) oversmoothing can be seen as a contractivity issue~\citep{arroyo2025vanishinggradientsoversmoothingoversquashing}: the iterative layer map $f$ causes the representations to converge toward the zero vector, making oversmoothing inevitable under contractivity.
This suggests a natural question: what happens if we break contractivity? Without the guaranty of a unique attractor, the system may admit multiple fixed points that can be non-trivial.
Indeed, we will show that equipped with suitable activation functions, the homogeneous fixed point of the GNN can be destabilized and create new stable attractors that encode meaningful information about the graph topology. 
This perspective reframes oversmoothing as a problem of \emph{stability engineering}, which will be examined through the lens of bifurcation theory.

\subsection{Bifurcation Theory and Stability Engineering}
\label{sec:bifurcation_intro}

The previous section established that oversmoothing arises because standard GNNs converge to a trivial fixed point. To overcome this, we need the system to support non-trivial stable equilibria. Our key observation is:
\vspace{-1em}
\begin{quote}
\emph{The choice of activation function determines the topology of the stability landscape.}
\end{quote}
\vspace{-1em}
To make this precise, consider the simplified GNN layer update on a single node graph (no connectivity matrix for now), parametrized by a coupling strength $w$ and general activation function $\phi$ (interpreted as a weight):
\vspace{-0.5em}
\begin{equation}\label{eq:phi_map}
    x^{(\ell+1)} = \phi(wx^{(\ell)}),
\end{equation}

\paragraph{Fixed points as energy minima}
Fixed points satisfy $\phi(wx^{(\ell)})-x^{(\ell+1)}=0$. This condition can be locally viewed as the extremum of an \emph{effective potential} $V(x)$ defined via
\begin{equation}
    -\frac{dV}{dx} = \phi(wx) - x, \quad \text{i.e.,} \quad V(x) = \int \left[ x - \phi(wx) \right] dx.
\end{equation}
This formulation provides a powerful geometric intuition: fixed points correspond to stationary points of $V$, with \emph{stable} fixed points at local minima and \emph{unstable} ones at local maxima, much like a ball rolling on a landscape comes to rest only at the bottom of valleys (Figure \ref{fig:effective_potential}). More details of this paradigm are provided in Appendix~\ref{app:eff_pot_stab}.

\paragraph{How the activation shapes the stability landscape}
The form of $\phi$ directly determines the shape of $V(x)$, and hence the number and stability of fixed points. To see this, expand a smooth activation around the origin. 
\vspace{-0.5em}
\begin{equation}
    \phi(x) \approx \alpha x - \frac{\beta}{2} x^2 - \frac{\gamma}{6} x^3 + \dots
\end{equation}
Substituting in the potential yields
\begin{equation} 
V(x) = \frac{1 - \alpha w}{2} x^2 + \frac{\beta w^2}{6} x^3+ \frac{\gamma w^3}{24} x^4 + O(x^6). 
\end{equation}
The coefficients $\alpha=\phi'(0), \beta = \phi''(0)$ and $\gamma=-\phi'''(0)$ determine the landscape topology. 

\begin{itemize}
    \item \textbf{Quadratic term} $(1 - \alpha w)/2$: determines whether $x = 0$ is a minimum (positive) or maximum (negative).
    \item \textbf{Quartic term} $\gamma w^3 / 24$: if positive, it ``walls off'' the potential at large $|x|$, creating new minima when the origin becomes unstable.
\end{itemize}

\paragraph{Classical examples.} Two canonical bifurcations are illustrated in Figure~\ref{fig:effective_potential}, to show how the coefficients $\alpha$, $\beta$, $\gamma$ shape the landscape:
\begin{itemize}
    \item \textbf{Supercritical pitchfork} ($\beta = 0$, $\gamma > 0$): Odd activations like $\sin(z)$ or $\tanh(z)$ have vanishing quadratic term. The potential $V(x) = \frac{1 - \alpha w}{2} x^2 + \frac{\gamma w^3}{24} x^4$ is symmetric: for $\alpha w > 1$, the single well splits into a double-well with two minima at $x^* = \pm\sqrt{6(\alpha w - 1)/\gamma w^3}$.
    
    \item \textbf{Transcritical} ($\beta > 0$, $\gamma = 0$): Activations with a non-zero quadratic term yield an asymmetric potential $V(x) = \frac{1 - \alpha w}{2} x^2 + \frac{\beta w^2}{6} x^3$. For $\alpha w > 1$, a single new stable minimum emerges at $x^* \propto (\alpha w - 1)$.
\end{itemize}
In both cases, crossing the critical threshold $\alpha w = 1$ destabilizes the origin and creates non-trivial stable states, precisely the mechanism needed to escape oversmoothing.
\paragraph{Two contrasting activations.}
Consider $\phi(z) = \sin(z)$ versus $\phi(z) = \mathrm{ReLU}(z)$: 
\begin{itemize} 
    \item \textbf{Sine:} $\alpha = 1$, $\gamma = 1 > 0$. For $w < 1$, the potential is a single well at $x = 0$. For $w > 1$, the quadratic coefficient flips sign, but the positive quartic term creates a double-well with two new stable minima at $x^* = \pm\sqrt{6(w-1)/w^3}$. 
    \item \textbf{ReLU:} $\alpha = 1$ (for $x > 0$), but $\gamma = 0$: there is no quartic restoring force. For $w > 1$, contractivity is broken, which causes the potential to tilt downward indefinitely. Consequently, iterations will diverge rather than settle into new stable states.
\end{itemize}
This observation poses a fundamental design question: can we, through a specific choice of the activation function, re-engineer the GNN's dynamics to destabilize this homogeneous fixed state, in favor of creating \emph{stable, and informative non-homogeneous patterns}? The next section presenting our main theoretical results provides an affirmative answer.


\vspace{-0.5em}
\subsection{Activations functions as Bifurcating Systems}
\label{subsection:bifurcation}
\vspace{-0.5em}

\begin{figure*}[t]
    \centering
    \includegraphics[
  width=0.90\textwidth
]{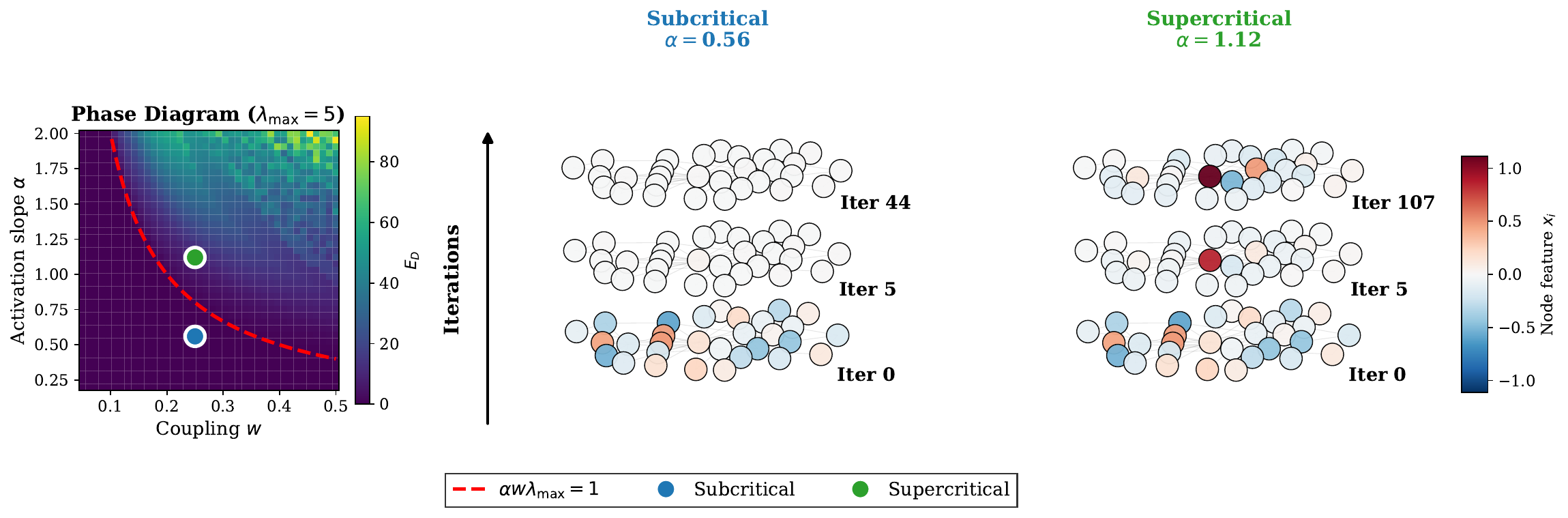}\hfill%
    \caption{\textbf{Phase diagram validating Theorem \ref{thm:pitchfork_general}.} \textit{Left}: Dirichlet energy $E_D$ as a function of activation slope $\alpha$ and coupling $w$ (dashed red) precisely separates the oversmoothing regime $E_D \approx0$ for the pattern forming regime $E_D>0$. \textit{Right}: Node representations at marked points. In the subcritical regime all representations collapse to zero (homogeneous fixed point). In the supercritical regime, a structured pattern emerges. }
    \label{fig:phase_diagram}
\end{figure*}


We now add connectivity between nodes to investigate the role of topology in the GNN dynamics.
\vspace{-0.5em}

\begin{equation}
\label{eq:toy_model}
  x^{(\ell+1)} = \phi(w A x^{(\ell)})
\end{equation}
\vspace{-0.5em}

where $x^{(\ell)} \in \mathbb{R}^N$ is one-dimensional node representations, $w$ is a scalar coupling parameter and $A\in\mathbb{R}^{N\times N}$ is the normalized adjacency matrix. Despite its simplicity, this update captures the core structure in widely-used architectures like GCN and GAT, where node representations are iteratively aggregated over the graph and passed through a point-wise nonlinearity. This minimal setting allows us to extract the essential mechanism by which activation functions govern the stability landscape, insights we generalize to realistic multi-feature GNNs in Section \ref{main:real}. 

While most oversmoothing mitigation strategies append components to this architecture or stray away from it, our main theoretical result demonstrates that with an appropriate choice of activation function, increasing the coupling $w$ destabilizes the trivial homogeneous solution $x=0$ triggering a bifurcation that creates a new pair of stable, non homogeneous fixed points.


\begin{theorem}[Supercritical pitchfork from odd $C^3$ activations]
\label{thm:pitchfork_general}
Let $A\in\mathbb{R}^{N\times N}$ be the normalized symmetric graph adjacency matrix with eigenpairs $A u_j = \lambda_j u_j$, and assume $\lambda_k>0$ is the simple maximum eigenvalue with unit eigenvector $u_k$.
Consider the discrete dynamical system
\begin{equation}
x^{(\ell+1)} = \phi \big(w\,A x^{(\ell)}\big),
\label{eq:general_map}
\end{equation}
where $\phi\in C^3(\mathbb{R})$ acts entrywise and satisfies $\phi(0)=0$.
Assume $\phi$ is \emph{odd} and
\vspace{-0.5em}
\[
\phi'(0)=\alpha>0,\qquad \phi'''(0)=-\gamma<0 .
\]
Define the critical coupling $w_k := \frac{1}{\alpha \lambda_k}$.
Then there exists $\varepsilon>0$ such that for all $w\in (w_k,\, w_k+\varepsilon)$:
\vspace{-1em}
\begin{enumerate}
\item \textbf{(Existence and shape)} 
Equation \eqref{eq:general_map} admits exactly two nonzero fixed points
$x_\pm^\star(w)$ in a neighborhood of $0$, and they satisfy
\vspace{-0.5em}
\[
x_\pm^\star(w) = \pm a^\star(w)\,u_k + O\!\big((a^\star(w))^3\big).
\]
\vspace{-1.5em}
\item \textbf{(Square-root scaling)} Let $\mu := \alpha w\lambda_k - 1,
\kappa_k := \sum_{i=1}^N u_{k,i}^4>0$. The amplitude obeys the supercritical pitchfork law
\vspace{-0.5em}
\[
a^\star(w) \;=\; \sqrt{\frac{6\,\mu}{\gamma\,(w\lambda_k)^3\,\kappa_k}}\;+\;O(\mu^{3/2}).
\]
\vspace{-1em}
\item \textbf{(Stability)} The bifurcated fixed points $x_\pm^\star(w)$ are locally exponentially stable,
while the homogeneous fixed point $x=0$ loses stability at $w=w_k$.
\end{enumerate}
\end{theorem}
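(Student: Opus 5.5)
We will carry out a Lyapunov--Schmidt reduction of the fixed-point equation $F(x,w) := \phi(wAx) - x = 0$ around $(x,w) = (0,w_k)$. Since $\phi$ is odd and $C^3$, we expand entrywise as $\phi(z) = \alpha z - \frac{\gamma}{6} z^3 + o(|z|^3)$; oddness kills the quadratic term. The linearization is then $D_xF(0,w) = \alpha w A - I$. It has eigenvalues $\alpha w \lambda_j - 1$ on $u_j$. Because $\lambda_k$ is the simple maximum eigenvalue, at $w = w_k$ the kernel is exactly $\mathrm{span}(u_k)$. On $u_k^\perp$ the operator is invertible, with eigenvalues $\lambda_j/\lambda_k - 1 \neq 0$ (strictly negative when $\lambda_j<\lambda_k$). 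We write $x = a u_k + v$ with $v \perp u_k$. Let $P$ denote the orthogonal projection onto $u_k^\perp$.

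\textbf{Step 1 (range equation).} Solve $P F(a u_k + v, w) = 0$ for $v$ by the implicit function theorem, which gives $v = V(a,w)$ for small $|a|$ and $w$ near $w_k$. Since $F(\cdot,w)$ is odd, the solution satisfies $V(-a,w) = -V(a,w)$. Since $V(0,w) = 0$ and $\partial_a V(0,w) = 0$ (the linear part preserves eigenspaces), we get $V = O(|a|^3)$. This yields the shape claim $x^\star = a u_k + O(a^3)$.

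\textbf{Step 2 (bifurcation equation).} Project onto $u_k$ to obtain the scalar equation
\[
g(a,w) := \langle u_k, F(a u_k + V, w)\rangle = \mu a - \frac{\gamma}{6}(w\lambda_k)^3 \kappa_k\, a^3 + R(a,w).
\]
The cubic coefficient comes from $\langle u_k, (w\lambda_k a u_k)^{\circ 3}\rangle = (w\lambda_k)^3 a^3 \sum_i u_{k,i}^4$. Here $\circ 3$ is the entrywise cube, and contributions of $V$ enter only at order $a^4$ or higher (and with oddness, $a^5$). The function $g$ is odd in $a$, so $g = a\,h(a^2,w)$ with $h(0,w) = \mu$ and $\partial_{a^2}h(0,w_k) = -\gamma (w_k\lambda_k)^3 \kappa_k/6 < 0$.

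\textbf{Regularity caveat.} Under mere $C^3$ regularity, we get $R = o(|a|^3)$ uniformly for $w$ near $w_k$, plus terms $O(\mu |a|^3)$. To obtain the stated $O(\mu^{3/2})$ error, one needs $\phi\in C^5$ or at least a Lipschitz third derivative. I would either assume this tacitly or settle for $o(\sqrt\mu)$. This is the main obstacle: making the remainder estimate uniform in $w$ and justifying division by $a$ with enough smoothness to apply the IFT to $h$.

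\textbf{Step 3 (solving for the amplitude).} Given Step 2, solve $h(s,w) = 0$ by the IFT in $(s,\mu)$, or directly by monotonicity for small $s$. This gives a unique small root $s = a^2 = 6\mu/(\gamma (w\lambda_k)^3\kappa_k) + O(\mu^2)$, which is positive exactly when $\mu>0$. Taking square roots yields exactly two nonzero solutions $\pm a^\star$, with the claimed formula. Uniqueness in a neighborhood follows from the uniqueness statements of both IFT applications.

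\textbf{Step 4 (stability).} For the origin, the spectral radius of $\alpha w A$ is $\alpha w\lambda_k = 1+\mu > 1$, so $0$ is unstable for $w>w_k$.

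For the bifurcated points, the Jacobian of the map is $J = w\,\mathrm{diag}(\phi'(w A x^\star))A$. Using $\phi'(z) = \alpha - \frac{\gamma}{2}z^2 + o(z^2)$, we get $J = \alpha w A - \frac{\gamma}{2} w\,\mathrm{diag}((w\lambda_k a^\star u_k)^{\circ 2})A + o(\mu)$.

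\emph{Eigenvalue near $u_k$.} By simple-eigenvalue perturbation, the eigenvalue of $J$ near $u_k$ is
\[
1 + \mu - \frac{\gamma}{2}(w\lambda_k)^3 \kappa_k (a^\star)^2 + o(\mu) = 1 + \mu - 3\mu + o(\mu) = 1 - 2\mu + o(\mu),
\]
which lies in $(-1,1)$ for small $\mu>0$. Here the perturbation is $\frac{\gamma}{2}w\lambda_k\, u_k^\top \mathrm{diag}(\cdot) u_k$. The matrix $J$ is similar to a symmetric matrix via $D^{1/2}$ when $\phi'>0$ near $0$, so its eigenvalues are real and first-order perturbation applies.

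\emph{Remaining eigenvalues.} These are $O(\mu)$-perturbations of $\alpha w\lambda_j$. We need $|\alpha w_k \lambda_j| < 1$ for $j \neq k$, which requires $\lambda_k$ to be strictly larger than $|\lambda_j|$ for all $j\neq k$. For a normalized adjacency this can fail on bipartite graphs, where $-\lambda_k$ is also an eigenvalue. I would state it as an implicit assumption ($\lambda_k > |\lambda_j|$) needed for item 3.

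\emph{Conclusion.} With these estimates the spectral radius of $J$ is $<1$, giving local exponential stability. The spectral-gap issue at $-\lambda_k$ is the second point to flag.
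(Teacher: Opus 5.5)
Your proposal is correct and is essentially the paper's own Lyapunov--Schmidt argument: the same splitting $x=au_k+y$ with $y=O(a^3)$, the same reduced cubic $\mu a-\tfrac{\gamma}{6}(w\lambda_k)^3\kappa_k a^3$, and the same critical multiplier $1-2\mu$. It is executed more carefully: your IFT-plus-oddness treatment supplies the uniqueness behind ``exactly two'' that the paper's continuity argument does not, and both issues you flag are genuine lapses in the paper's proof, which silently uses a quintic remainder $|R_5(z)|\le C|z|^5$ and only enforces $w\lambda_r<1$ before concluding $|m_r|<1$.

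Two small corrections to your caveats. First, a Lipschitz $\phi'''$ only yields an $O(|z|^4)$ remainder and hence an $O(\mu)$ amplitude error; the $O(\mu^{3/2})$ law needs an $O(|z|^5)$ remainder, e.g.\ $\phi\in C^5$. Second, the bipartite case is not a real obstruction: there the $(-\lambda_k)$-eigenvector is $u_k$ with signs flipped on one side, so the same first-order computation gives that mode the multiplier $-(1+\mu)+3\mu+o(\mu)=-1+2\mu+o(\mu)$, which still has modulus below $1$.
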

\vspace{-0.5em}

The proof is given in Appendix \ref{app:proof-spontaneous}. This theorem reveals a critical interplay between the choice of the activation function, learnable parameters and graph topology $\alpha$, $w$, $(\lambda_{k},u_k)$. It yields a concrete design principle to avoid oversmoothing, replace activations like ReLU with alternatives possessing (i) a positive slope at zero ($\alpha = \phi'(0) > 0$) and ii) a \emph{stabilizing} cubic nonlinearity ($\gamma =\phi'''(0) <0$). Common activations satisfying these conditions include $\sin(x)$ and $\tanh(x)$

\begin{remark}
This is not the only route to avoid oversmoothing. Activations with $\alpha > 0$ and $\phi''(0) < 0$ induce a \emph{transcritical} bifurcation instead. Interestingly, the Allen--Cahn and Fisher reactions introduced in \citet{pmlr-v202-choi23a} correspond to pitchfork and transcritical systems, respectively. However, their framework relies on continuous-time reaction-diffusion PDEs that mitigate oversmoothing, whereas our result applies to standard discrete message-passing with a simple activation substitution.
\end{remark}

\paragraph{Quantifying the departure from oversmoothing}

This theorem proves that by modifying the activation function, a simple GNN model can have \emph{new} stable fixed points that are non-zero ($a_\star > 0$) and non-homogeneous (aligned with the leading eigenvector $u_k$). These structured patterns resist oversmoothing by maintaining representational diversity across nodes.
To quantify this effect, we derive the Dirichlet energy, a common measure of oversmoothing ~\citep{chen2020measuring,pmlr-v202-choi23a,cai2020note}, in the following corollary :

\begin{corollary}[Dirichlet Energy]
\label{cor:dirichlet} Under the same assumptions as in Theorem~\ref{thm:pitchfork_general}, let $L$ be the associated normalized graph Laplacian, the Dirichlet energy $E_d(x) := x^\top L x$ of the fixed-point solutions satisfies:

$ E_D^\star
= E_D(x^\star)
=
\begin{cases}
0, & \mu\le 0\\[4pt]
C_k \mu + O(\mu^{2}), & \mu >0,
\end{cases}
$
\\
where $\mu := \alpha w\lambda_k - 1$ is the bifurcation parameter and $C_k = \tfrac{6(1-\lambda_k)}{\gamma (w \lambda_k)^3 \kappa_k}>0$ is a positive constant.
\end{corollary}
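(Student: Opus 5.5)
The plan is to substitute the fixed-point expansion from Theorem~\ref{thm:pitchfork_general} directly into the quadratic form $E_D(x)=x^\top L x$, using $L = I - A$ for the normalized Laplacian associated with the normalized symmetric adjacency $A$. Because $L$ and $A$ share eigenvectors, we have $L u_k = (1-\lambda_k)u_k$. The two regimes $\mu\le 0$ and $\mu>0$ are handled separately.

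\emph{Subcritical case $\mu\le 0$.} Here the relevant fixed point is the homogeneous one, $x^\star=0$. By the stability part of Theorem~\ref{thm:pitchfork_general}, this point is the local attractor below threshold, since its linearization $\alpha w A$ has spectral radius $\alpha w\lambda_k\le 1$. I will also assume that the largest eigenvalue is the one dominating in modulus, which is implicit in the statement. Then $E_D(0)=0$ trivially.

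\emph{Supercritical case $\mu>0$.}
\begin{itemize}
\item Write $x^\star_\pm = \pm a^\star u_k + r$ with $r=O((a^\star)^3)$, as given by the existence part of the theorem.
\item Expand the energy:
\[
E_D(x^\star)=(a^\star)^2 u_k^\top L u_k \pm 2a^\star u_k^\top L r + r^\top L r .
\]
\item The leading term equals $(1-\lambda_k)(a^\star)^2$, because $u_k$ is a unit eigenvector.
\item The cross term is $O((a^\star)^4)$, since $\|L\|\le 2$. The last term is $O((a^\star)^6)$.
\item Insert the square-root law $(a^\star)^2 = \frac{6\mu}{\gamma (w\lambda_k)^3\kappa_k} + O(\mu^2)$. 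This comes from squaring $a^\star = c\sqrt{\mu}+O(\mu^{3/2})$, which gives $c^2\mu + O(\mu^2)$.
\item The result is $E_D^\star = C_k\mu + O(\mu^2)$ with $C_k$ exactly as stated. Since $(a^\star)^4=O(\mu^2)$, all remainders are absorbed.
\end{itemize}

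One further point needs care. The factor $w$ appearing inside $C_k$ depends on $\mu$ through $w=(1+\mu)/(\alpha\lambda_k)$. Replacing it by $w_k$ would change only the $O(\mu^2)$ term, so either reading of the constant is consistent.

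Positivity of $C_k$ follows from the hypotheses. We have $\gamma>0$, $w\lambda_k>0$, and $\kappa_k>0$. It remains to show $1-\lambda_k>0$. For a normalized adjacency the spectrum lies in $[-1,1]$, so $1-\lambda_k\ge 0$. Strict positivity requires $\lambda_k<1$; equivalently, $u_k$ must not be the trivial (degree-weighted constant) eigenvector of $L$.

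I expect this to be the main obstacle. If $A$ is the standard normalized adjacency of a connected graph, then $\lambda_k=1$ and $C_k=0$. Genuine positivity therefore needs an additional interpretation, for example:
\begin{itemize}
\item $A$ is a shifted or filtered operator whose top eigenvalue is below $1$, or
\item $\lambda_k$ is understood as the leading nontrivial mode, with the constant mode removed.
\end{itemize}
I would state this explicitly as the condition under which the claim $C_k>0$ holds, and note that otherwise the leading term degenerates and the energy is $O(\mu^2)$.
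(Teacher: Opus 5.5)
Your proposal is correct and follows the paper's proof step for step. You substitute $x^\star_\pm=\pm a_\star u_k+O(a_\star^3)$ into $x^\top L x$, use $L=I-A$ to get $(1-\lambda_k)a_\star^2+O(a_\star^4)$, and insert the squared amplitude law. Your caveat on positivity is well founded and is a point the paper's proof skips: it asserts $C_k>0$ from $\lambda_k>0$, $\kappa_k>0$, $\mu>0$ alone. For the standard normalized adjacency of a connected graph $\lambda_k=1$, so $Lu_k=0$ and the energy reduces to $r^\top L r=O(\mu^3)$, which is even smaller than the $O(\mu^2)$ you state.
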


The proof is given in \ref{app:proof-dirichlet}. Therefore, the Dirichlet energy remains zero in the subcritical (oversmoothing) regime but becomes positive and grows \emph{linearly} with the bifurcation parameter $\mu$ in the supercritical regime (pattern formation).

The bifurcation condition $\alpha w \lambda_{k}=1$ is empirically validated in Figure \ref{fig:phase_diagram}, which plots the Dirichlet Energy $E_D$ after iterating the mapping in Eq. \ref{eq:toy_model} of randomly initialized representations to convergence on Barab\'{a}si-Albert graphs (see Appendix for details). 

\paragraph{Connection to Effective Potential}
This bifurcation mechanism admits an intuitive energy-landscape interpretation. The amplitude $a^*$ minimizes an effective potential analogous to the Landau free energy in phase transition theory. 
\begin{corollary}[Effective Potential Energy]
\label{cor:eff_pot}
The bifurcating fixed-point amplitude $a_\star$ from Theorem \ref{thm:pitchfork_general} minimizes the ``effective potential energy'' $V(a)$:
\[
V(a)
= \frac{(1 - \alpha w\lambda_k)}{2}\,a^2
  -\gamma \frac{(w\lambda_k)^3}{4}\,\kappa_k\,a^4
  + O(a^6).
\]
\end{corollary}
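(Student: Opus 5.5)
The plan is to obtain $V$ by integrating the one-dimensional reduced fixed-point equation that comes out of the Lyapunov--Schmidt reduction in the proof of Theorem~\ref{thm:pitchfork_general}. I will then check that $a_\star$ is a nondegenerate critical point of $V$ with $V''(a_\star)>0$.

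\textbf{Notation.} In the displayed $V$, I read the symbol $\gamma$ as the signed cubic Taylor coefficient of the activation, i.e.\ $\phi(z)=\alpha z+\gamma z^3+O(z^5)$, so that $\gamma=\phi'''(0)/6<0$ under the theorem's hypothesis. This is the convention for which the stated form $-\gamma\frac{(w\lambda_k)^3}{4}\kappa_k a^4$ has the right sign. The minus sign then makes the quartic wall positive, which is exactly the role the cubic term plays in Section~\ref{sec:bifurcation_intro}. I will state this convention explicitly and translate back at the end: with $\gamma_{\mathrm{thm}}=-\phi'''(0)$, the coefficient $-\gamma/4$ becomes $+\gamma_{\mathrm{thm}}/24$.

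\textbf{Step 1 (reduced equation).} I write $x=a\,u_k+v$ with $v\perp u_k$. Projecting the fixed-point equation $x=\phi(wAx)$ onto $u_k^\perp$ and applying the implicit function theorem gives $v=O(a^3)$, because $I-\alpha w A$ is invertible on $u_k^\perp$ near $w_k$ (the eigenvalue $\lambda_k$ is simple). Projecting onto $u_k$ and using $wAx=w\lambda_k a u_k+O(a^3)$ gives
\[
a \;=\; g(a) \;:=\; \alpha w\lambda_k\, a+\gamma (w\lambda_k)^3\Big(\sum_i u_{k,i}^4\Big)a^3+O(a^5).
\]
Oddness of $\phi$ removes the even orders, and $\kappa_k$ arises from $\langle u_k,u_k^{\odot 3}\rangle$.

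\textbf{Step 2 (potential).} Following the paradigm of Section~\ref{sec:bifurcation_intro}, I define $V$ by $-V'(a)=g(a)-a$. Integrating term by term gives
\[
V(a)=\tfrac{1-\alpha w\lambda_k}{2}a^2-\gamma\tfrac{(w\lambda_k)^3}{4}\kappa_k a^4+O(a^6),
\]
which is the stated expression. Because $V$ is even, the error term is $O(a^6)$.

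\textbf{Step 3 (minimality).} The nonzero critical points solve $-\mu-\gamma(w\lambda_k)^3\kappa_k a^2+O(a^4)=0$. This gives $a_\star^2=\mu/\big(-\gamma(w\lambda_k)^3\kappa_k\big)+O(\mu^2)$, which matches item~2 of Theorem~\ref{thm:pitchfork_general} after the translation of $\gamma$ above. Then
\[
V''(a_\star)=-\mu-3\gamma(w\lambda_k)^3\kappa_k a_\star^2+O(a_\star^4)=-\mu+3\mu+O(\mu^2)=2\mu+O(\mu^2),
\]
which is positive for $w\in(w_k,w_k+\varepsilon)$. So $\pm a_\star$ are strict local minima, and since $V''(0)=-\mu<0$, the origin is a local maximum.

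\textbf{Main obstacle.} The main difficulty is making the $O(\cdot)$ control rigorous. The reduced map $g$ includes the feedback of $v=O(a^3)$, which enters only at order $a^5$ because $\phi''(0)=0$, and I must verify this so that the error cannot spoil the sign of $V''$. I would bound these remainders uniformly on a ball $|a|\le C\sqrt{\mu}$ using $\phi\in C^3$, together with the $O(\mu^{3/2})$ amplitude estimate already established in the theorem.
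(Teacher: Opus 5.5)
Your proposal is correct and follows the same route as the paper: take the reduced amplitude equation from the Lyapunov--Schmidt step of Theorem~\ref{thm:pitchfork_general}, define $V$ by $-V'(a)=g(a)-a$, and integrate. Your reading of $\gamma$ as $\phi'''(0)/6$ is exactly what the statement needs. With the theorem's convention $\gamma=-\phi'''(0)>0$, the paper's own appendix derivation gives the quartic coefficient $+\gamma(w\lambda_k)^3\kappa_k/24$, which is your $-\gamma/4$ after translation. Your explicit check $V''(a_\star)=2\mu+O(\mu^2)>0$, consistent with $1-m_k$ for the critical multiplier $m_k=1-2\mu$ in Step~5 of the theorem's proof, slightly strengthens the paper, which infers minimality only from the double-well shape of the truncated polynomial.
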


\begin{figure*}[!t]
    \centering
    \includegraphics[width=0.85\textwidth]{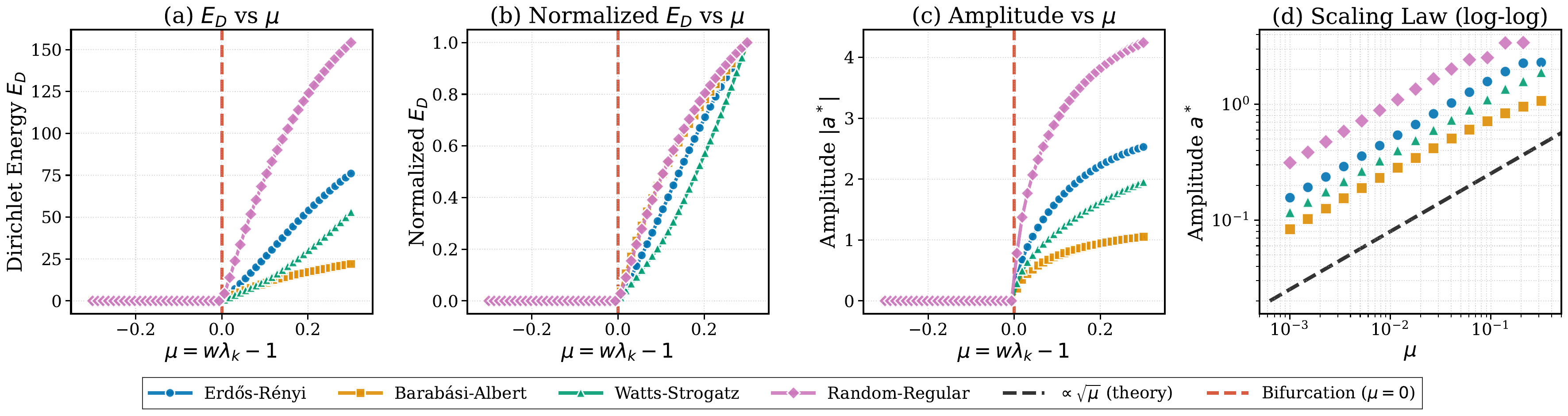}\hfill%
    
    \caption{\textbf{Empirical validation of theoretical predictions.} Dirichlet energy, amplitude and scaling laws versus normalized coupling $\frac{w}{w_k}$ on Erdos-Renyi, Barabasi-Albert, Watts-Strogatz and Random-Regular graph topologies}
    \label{fig:empirical_validation}
\end{figure*}

The proof is given in \ref{app:eff_pot-simplified}. This energy function corresponds to a \emph{Supercritical Pitchfork} bifurcation system as in (see Fig.\ref{fig:effective_potential}) and elegantly explains the bifurcation mechanism:
\begin{itemize}
    \item \textbf{Subcritical regime ($\alpha w \lambda_k < 1$):} The coefficient of the $a^2$ term $(1-\alpha w\lambda_k)/2$ is positive. $V(a)$ is a single well with its minimum at $a=0$. The system is trapped in the stable, homogeneous (oversmoothed) state.
    \item \textbf{Supercritical regime ($\alpha w \lambda_k > 1$):} The coefficient of the $a^2$ term becomes negative, destabilizing $a=0$ into an unstable local \emph{maximum}. The $a^4$ term (which is positive) bounds the energy, creating two new, symmetric minima at $a_\star \neq 0$. 
\end{itemize}

\paragraph{Empirical verification of scaling laws.}
The theoretical predictions of this section yield precise, testable scaling laws 
: the amplitude grows as $a^\star \sim \sqrt{\mu}$ (Theorem~\ref{thm:pitchfork_general}) and the Dirichlet energy grows as $E_D^\star \sim \mu$ (Corollary~\ref{cor:dirichlet}). Although our derivation mode-coupling reduction is local,Figure~\ref{fig:empirical_validation} validates both predictions across Erd\H{o}s--R\'{e}nyi, Barab\'{a}si--Albert, Watts--Strogatz, and random-regular graphs. In particular, the log-log plot (Figure~\ref{fig:empirical_validation}d) confirms the predicted $\sqrt{\mu}$ scaling, with all graph types collapsing onto the theoretical curve near the bifurcation point $\mu = 0$.

\vspace{-1em}
\section{Bifurcation Unlocking Topological Learning Priors}
\label{main:prior}
We have established that bifurcation prevents the collapse of node representations to a trivial state. We now show that this mechanism plays a second, equally critical role: it induces a strong \textit{inductive bias} that accelerates learning for specific topological patterns.

\vspace{-1em}
\subsection{Neural Tangent Kernel Analysis of Bias in Learning Dynamics}
\label{section:ntk}
This section shows that initializing near the bifurcation point does more than prevent collapse: it induces a strong \emph{learning prior} that biases optimization toward a specific graph eigenmode. Using the Neural Tangent Kernel (NTK) framework~\citep{jacotNTK2018}, we show that near criticality the kernel becomes asymptotically rank-one, causing gradient descent to preferentially learn patterns aligned with the bifurcating mode. More details are given in Appendix \ref{app:ntk}.

Under gradient descent, the network output evolves as
\begin{equation}
    \frac{d}{dt}(f_t - \mathcal{Y}) = -\eta K (f_t - \mathcal{Y}),
    \label{eq:ntk_dynamics}
\end{equation}
where $K$ is the NTK. Decomposing the error into eigenmodes of $K$ shows that modes associated with larger kernel eigenvalues are learned faster.

We apply this analysis to our scalar GNN model. At initialization, the prediction at node $i$ is the fixed-point value $f_w(i) := x_i^\star(w)$. Since the only learnable parameter is $w$, the node--node NTK reduces to
\begin{equation}
    K_w(i,j) := \partial_w f_w(i)\,\partial_w f_w(j).
    \label{eq:node_kernel}
\end{equation}

By analyzing the sensitivity of the bifurcating fixed point $x^\star(w)$ with respect to $w$, we obtain the following result.

\begin{corollary}[NTK Mode Selection at Bifurcation]
\label{cor:ntk}
Let $x^\star(w)$ denote the initialized node features, $(u_r)_{r=1}^n$ the eigenvectors of the adjacency matrix $A$, and $\mu := \alpha w \lambda_k - 1$ the bifurcation parameter.
\begin{enumerate}
    \item \textbf{(Supercritical regime: $w > w_k$, $0 < \mu \ll 1$)}
    \[
    K_w(i,j) = \frac{C}{\mu}\,u_k(i)\,u_k(j) + O(1),
    \]
    where $C>0$ depends on $\alpha, \lambda_k, \gamma,$ and $\kappa_k$. The kernel is asymptotically rank-one and dominated by the bifurcating mode $u_k$.

    \item \textbf{(Subcritical regime: $w < w_k$)}
    \[
    K_w(i,j) \propto \sum_r \frac{1}{(1 - \alpha w \lambda_r)^2}\,u_r(i)\,u_r(j),
    \]
    with no diverging eigenvalue and no single selected mode.
\end{enumerate}
\end{corollary}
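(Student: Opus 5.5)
The plan is to differentiate the fixed-point equation $x^\star = \phi(wAx^\star)$ with respect to $w$ and solve the resulting linear system in the eigenbasis $(u_r)$ of $A$. Implicit differentiation gives
\[
\bigl(I - w D_\phi(x^\star) A\bigr)\,\partial_w x^\star = D_\phi(x^\star)\,A x^\star,
\qquad D_\phi(x) := \mathrm{diag}\bigl(\phi'(wAx)\bigr).
\]
Since $K_w = \partial_w x^\star (\partial_w x^\star)^\top$ is rank-one by construction, both claims reduce to describing the vector $v := \partial_w x^\star$. For the rank-one structure it is enough to show that $v$ is dominated by its $u_k$ component, with the stated size.

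\textbf{Supercritical regime.} Here I would use Theorem~\ref{thm:pitchfork_general} directly. It gives $x^\star_\pm(w) = \pm a^\star(w) u_k + O((a^\star)^3)$ with $a^\star = c\sqrt{\mu}\,(1+O(\mu))$, where $c = \sqrt{6/(\gamma (w\lambda_k)^3\kappa_k)}$. Since $\mu = \alpha w\lambda_k - 1$, we have $\partial_w \mu = \alpha\lambda_k$, and therefore
\[
\partial_w a^\star = \frac{c\,\alpha\lambda_k}{2\sqrt{\mu}} + O(\sqrt{\mu}).
\]
This step needs the Lyapunov--Schmidt branch to be differentiable in $w$ with remainders that can be differentiated term by term. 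I would get this by writing the reduced bifurcation equation as $g(a,w) = a\,h(a^2,w)$ with $h$ smooth (oddness of $\phi$ and $C^3$ regularity), and then applying the implicit function theorem to $h(s,w)=0$ in the variables $s = a^2$ and $\mu$. This yields $s^\star(\mu)$ as a $C^1$ function with $s^\star{}'(0) = c^2$, and hence $a^\star = \sqrt{s^\star}$. The complementary component $x^\star - a^\star u_k$ is a $C^1$ function of $(a,w)$ of order $a^3$, so its $w$-derivative is $O(a^2\,\partial_w a) = O(\sqrt{\mu})$. Altogether $v = \frac{c\alpha\lambda_k}{2\sqrt{\mu}}\,u_k + O(\sqrt{\mu})$. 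Forming the outer product gives
\[
K_w = \frac{C}{\mu}\,u_k u_k^\top + O(1), \qquad C = \frac{c^2\alpha^2\lambda_k^2}{4} = \frac{3\alpha^2\lambda_k^2}{2\gamma (w\lambda_k)^3\kappa_k},
\]
where the cross terms are $O(\mu^{-1/2}\sqrt{\mu}) = O(1)$. Substituting $w\lambda_k = 1/\alpha + O(\mu)$ shows that $C$ depends only on $\alpha,\lambda_k,\gamma,\kappa_k$.

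\textbf{Subcritical regime.} Here $x^\star = 0$ for every $w$, so $v$ vanishes identically and a literal reading would make $K_w$ trivially zero. I would therefore interpret the statement as concerning the linearized response at a finite-depth or perturbed state. One option is to take features $x^{(\ell)}$ obtained from an input $x^{(0)}$, or a small forcing $b$ via $x = \phi(wAx) + b$; linearizing then gives $x \approx (I - \alpha w A)^{-1} b$. Its $w$-derivative is
\[
\alpha A (I-\alpha wA)^{-2} b = \sum_r \frac{\alpha\lambda_r \langle u_r, b\rangle}{(1-\alpha w\lambda_r)^2}\,u_r .
\]
Taking the expectation over isotropic $b$ of the outer product gives a kernel diagonal in $(u_r)$ with weights proportional to $(1-\alpha w\lambda_r)^{-2}$, up to the bounded factor $\lambda_r^2$, which I would absorb into ``$\propto$'' or drop by differentiating with respect to $\log$-coupling as appropriate. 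Because $\alpha w\lambda_r < 1$ for all $r$ when $w < w_k$, every weight is finite and no single mode is selected; all weights stay bounded away from infinity until $w \to w_k$.

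\textbf{Main obstacle.} The hard part is justifying the differentiation of the $O(\cdot)$ remainders in Theorem~\ref{thm:pitchfork_general}, which the theorem states only pointwise in $w$. The reparametrization $s = a^2$ together with the implicit function theorem in $(s,\mu)$ is my first attempt. The subcritical part mainly requires fixing the right interpretation, since the fixed point itself is trivial.
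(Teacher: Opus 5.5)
\paragraph{Supercritical regime.} Here you follow the paper's own route: differentiate in $w$ the branch $x^\star_\pm=\pm a^\star u_k+O((a^\star)^3)$ from Theorem~\ref{thm:pitchfork_general}. You do it more carefully than the paper, which differentiates the $O(\cdot)$ terms without justification. The paper also records only an $O(1)$ remainder in $\partial_w x^\star$, which would leave cross terms of order $\mu^{-1/2}$ in $K_w$. Your $O(\sqrt{\mu})$ bound on the remainder of $\partial_w x^\star$ is what actually yields the stated $O(1)$ error.

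One caveat. If the implicit function theorem only gives $s^\star\in C^1$, you get $\partial_w a^\star=\frac{c\alpha\lambda_k}{2\sqrt{\mu}}\,(1+o(1))$, not $+\,O(\sqrt{\mu})$. The sharper remainder needs $s^\star$ to have a Lipschitz derivative, which requires roughly $\phi\in C^5$ rather than $C^3$. The paper tacitly assumes the same through its $R_5$ bound.

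\paragraph{Subcritical regime: the genuine gap.} You are right that, read literally, $K_w\equiv 0$ there; the paper's proof simply asserts that ``the NTK scales as the square of the resolvent.'' But your forced model does not give the stated weights. Your $v=\alpha A(I-\alpha wA)^{-2}b$ already carries the squared resolvent, and the outer product squares it again. For $\mathbb{E}[bb^\top]=\sigma^2 I$,
\[
\mathbb{E}[vv^\top]=\sigma^2\alpha^2A^2(I-\alpha wA)^{-4}=\sigma^2\alpha^2\sum_r\frac{\lambda_r^2}{(1-\alpha w\lambda_r)^4}\,u_ru_r^\top ,
\]
so the exponent is $4$, not $2$.

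The factor $\lambda_r^2$ also cannot be absorbed into ``$\propto$''. It is mode-dependent (it annihilates modes with $\lambda_r=0$), and differentiating in $\log w$ merely replaces it by $(\alpha w\lambda_r)^2$. As written, your derivation proves a different kernel from the one in the statement. Only the qualitative conclusion survives: all weights stay finite for $w<w_k$.

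To land on the stated formula you need a sensitivity carrying a single resolvent. For example, take the Jacobian of the forced fixed point with respect to the forcing or input, $\partial_b x=(I-\alpha wA)^{-1}$. Its Gram matrix is $(I-\alpha wA)^{-2}=\sum_r(1-\alpha w\lambda_r)^{-2}u_ru_r^\top$, which is what the paper's one-line argument implicitly uses.
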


\textbf{Interpretation.}
Below the bifurcation, the NTK spectrum is bounded and learning is distributed across graph modes. Near criticality, the kernel eigenvalue associated with $u_k$ diverges as $\lambda_k^{(K)} \propto 1/\mu$, implying that the learning time of this mode scales as $\tau_k \sim (\eta \lambda_k^{(K)})^{-1} \propto \mu$ and vanishes as $\mu \to 0$. Consequently, gradient descent is strongly biased toward the graph eigenmode selected by the bifurcation.

\begin{quote}
\emph{Criticality selects the graph mode; the kernel inherits it as a learning prior.}
\end{quote}

\begin{figure*}[!t]
    \centering
    \includegraphics[width=0.8\textwidth]{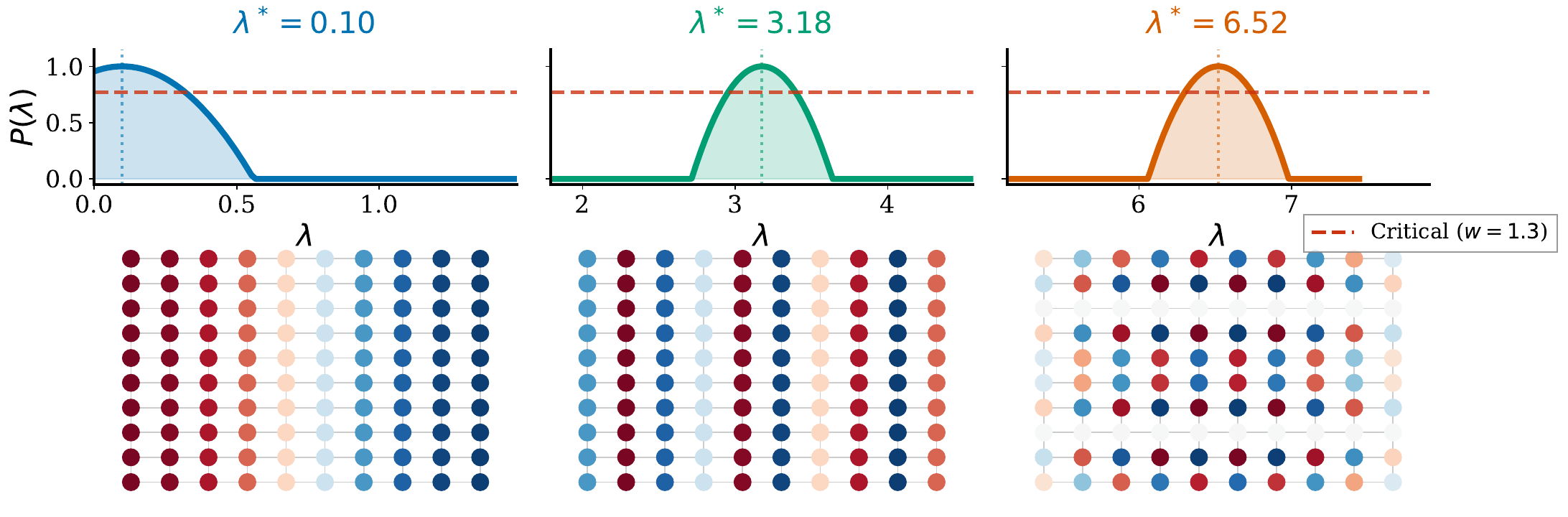}\hfill%
    
    \caption{\textbf{Spectral filtering controls mode selection.} \textit{Top}: Bandpass filters $P(\lambda)$ centered at low, mid and high frequencies. The dashed line is the critical threshold $\frac{1}{\alpha w}$, only modes that exceed this threshold can bifurcate. \textit{Bottom}: Stable patterns on a $10 \times 10$ grid graph obtained by iterating $x^{(\ell+1)}=\phi(wP(A)x^{(\ell)})$ to convergence from random initializations. Each filter selects distinct eigenmode: smooth (low frequency), stripped (mid-frequency), or checkerboard (high-frequency), which illustrates topological prior selection without collapse.}
    \label{fig:pattern_selection}
\end{figure*}

\vspace{-1em} 
\subsection{Spectral Filtering as Prior Selection}
By default, the bifurcation mechanism selects the eigenmode $u_k$ with largest eigenvalue $\lambda_k$, as it has the smallest critical threshold $\alpha w \lambda_k = 1$. However, different tasks benefit from different topological priors.

To control mode selection, we replace $A$ with a polynomial filter $P(A) = \sum_{j=0}^K\theta_jA^j$. The bifurcation condition becomes $\alpha w P(\lambda_k) = 1$, so the first mode to destabilize is $\arg\max_k P(\lambda_k)$. By shaping $P$, we directly control which prior emerges, to adapt to the task and the graph type. Specifically:
\vspace{-1em}
\begin{itemize}
    \item \emph{Homophilic graphs}: Low-pass filters select smooth eigenmodes aligned with clusters.
    \item \emph{Heterophilic graphs}: High-pass filters select oscillatory modes distinguishing adjacent nodes.
\end{itemize}
\vspace{-1em}
Figure \ref{fig:pattern_selection} demonstrates this on a grid graph, where filtered dynamics $x^{(\ell+1)}=\phi(wP(A)x^{(\ell)})$ converge to structured patterns determined the eigenmodes selected by the filter.

\begin{remark}
While polynomial filters appear in ChebNet~\citep{tang2024chebnet} and GPR-GNN~\citep{chien2020adaptive}, our contribution is the bifurcation-theoretic interpretation: $P$ determines \emph{which} mode is selected, while nonlinearity determines \emph{whether} patterns emerge.
\end{remark}

\vspace{-1em}
\section{Generalization to Realistic GNN Architectures}
\label{main:real}

The 1D model in Section~\ref{sec:bifurcation_intro} provides the core analytic mechanism. We now generalize this discovery to the realistic, multidimensional GNN setting. We analyze the discrete map corresponding to a GNN layer with $d$-dimensional features, where the graph operator $A$ acts on the nodes and a weight matrix $W$ mixes the features. For the sake of brevity, we formulate here Theorem~\ref{thm:variance-bifurcation} a simplified version of the main Theorem~\ref{app:variance-bifurcation} in the Appendix (followed by its proof in \ref{app:main_real}).


\begin{theorem}[GNN Bifurcation on realistic GNNs]
\label{thm:variance-bifurcation}
Consider the GNN layer update $X^{(\ell+1)} = \phi(A\, X^{(\ell)}\, W)$ with normalized adjacency $A$, random weights $W$, and nonlinear activation $\phi$ satisfying $\phi'(0)=\alpha>0$, $\phi'''(0)=-\gamma<0$.

Let $\lambda_{k}$ and $\rho(W)$ be the largest eigenvalues of $A$ and $W$ respectively. Then:
\begin{enumerate}
    \item The zero fixed point loses stability when $\alpha \rho(W)\,\lambda_{k} = 1$, triggering a pitchfork bifurcation.

    \item For $\alpha \rho(W)\,\lambda_{k}>1$, two stable equilibria emerge as rank-one patterns:
    $X^\star_{\pm} = \pm\,a^\star\,u_k v_j^\top + \text{higher order terms}$,
    where $u_k$, $v_j$ are dominant eigenmodes of $A$, $W$.
\end{enumerate}
\end{theorem}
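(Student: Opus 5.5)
The plan is to reduce the matrix map $F(X)=\phi(AXW)$ on $\mathbb{R}^{N\times d}$ to the scalar setting of Theorem~\ref{thm:pitchfork_general} by Lyapunov--Schmidt reduction. I would work in vectorized form, where $\mathrm{vec}(AXW)=(W^\top\otimes A)\,\mathrm{vec}(X)$, so the linearization at $0$ is $DF(0)=\alpha\,(W^\top\otimes A)$. I would assume, as the simplified statement implicitly does, that $W$ is diagonalizable with a real, simple dominant eigenvalue $\rho(W)>0$, right eigenvector $v_j$ and left eigenvector $\tilde v_j$ (normalized so that $\tilde v_j^\top v_j=1$). For random Gaussian $W$ of moderate size this holds with positive probability. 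The eigenvalues of $W^\top\otimes A$ are the products $\rho_i(W)\lambda_r$, so the spectral radius of $DF(0)$ equals $\alpha\rho(W)\lambda_k$ and is attained simply, with eigenvector $u_k\otimes v_j$, i.e.\ the rank-one matrix $u_kv_j^\top$. I also need that no other product $\rho_i\lambda_r$ has modulus $\ge\rho(W)\lambda_k$. This requires $\lambda_k$ to dominate $|\lambda_N|$ (on bipartite graphs it does not) and $|\rho_i|<\rho(W)$; I would state these as standing assumptions. Part~1 then follows directly: $0$ is hyperbolic-stable exactly when $\alpha\rho(W)\lambda_k<1$, and a single real eigenvalue crosses $+1$ at $\alpha\rho(W)\lambda_k=1$.

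For Part~2, I would take $s=\rho(W)$ as the bifurcation parameter: rescale $W=s\hat W$ and set $\mu=\alpha s\lambda_k-1$. I would write $X=a\,u_kv_j^\top+Y$, with $Y$ in the complementary invariant subspace. The projector is $P(Z)=(u_k^\top Z\tilde v_j)\,u_kv_j^\top$. Since $I-DF(0)$ is invertible on the range of $I-P$ near criticality, the implicit function theorem gives $Y=h(a,\mu)$ with $h=O(a^3)$. The quadratic terms vanish because $\phi$ is odd, as in Theorem~\ref{thm:pitchfork_general}. Projecting the fixed-point equation onto the kernel direction and expanding $\phi(z)=\alpha z-\tfrac{\gamma}{6}z^3+O(z^5)$ gives the bifurcation equation
\[
a = (1+\mu)a - \frac{\gamma}{6}(s\lambda_k)^3\,\kappa\,a^3 + O(a^5+|\mu| a^3).
\]
The cubic coefficient is obtained from the entrywise cube of $s\lambda_k\,a\,u_kv_j^\top$, namely $u_{k,i}^3v_{j,m}^3$, projected by $u_k^\top(\cdot)\tilde v_j$. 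This yields $\kappa=\kappa_k\cdot\sum_m v_{j,m}^3\tilde v_{j,m}$.

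Dividing by $a$ gives the nonzero branches $a^\star=\pm\sqrt{6\mu/(\gamma(s\lambda_k)^3\kappa)}+O(\mu^{3/2})$, provided $\kappa>0$. The rank-one shape $X^\star_\pm=\pm a^\star u_kv_j^\top+O((a^\star)^3)$ follows from $h=O(a^3)$.

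Stability follows as in the scalar proof. The Jacobian at $X^\star$ restricted to the center direction has eigenvalue $1-2\mu+O(\mu^{3/2})<1$. The transverse eigenvalues stay in the open unit disk by continuity from $0$, using the spectral gap assumed above.

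The main obstacle is the sign of the mode-coupling coefficient $\kappa$. For symmetric $W$ we have $\tilde v_j=v_j$, and $\sum_m v_{j,m}^4>0$ automatically. For a general non-normal random $W$, however, $\sum_m v_{j,m}^3\tilde v_{j,m}$ can in principle be negative, which would make the bifurcation subcritical. I would first try to restrict to symmetric (or normal) $W$, or to Perron-type $W$ with positive eigenvectors. Failing that, I would state the theorem with $\kappa>0$ as an explicit hypothesis. A second obstacle is a complex-conjugate dominant eigenvalue of $W$, which gives a Neimark--Sacker rather than a pitchfork bifurcation; I would exclude it by assumption.
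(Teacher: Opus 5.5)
Your argument has the same skeleton as the paper's proof: vectorize to $W^\top\otimes A$, read Part~1 off the product spectrum, make a rank-one ansatz, expand $\phi$ entrywise to cubic order, project, and solve the cubic normal form. The reduction step, however, is genuinely different.

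\textbf{The paper's reduction.} The paper projects Frobenius-orthogonally onto $uv^\top$. When computing $A(uv^\top)W$ it uses $Wv=\sigma_jv$ and $v^\top W=\sigma_jv^\top$ at the same time, i.e.\ it silently identifies the dominant left and right eigenvectors of $W$ (effectively a symmetric $W$). This is what yields the automatically positive coefficient $\kappa_k\xi_j$ with $\xi_j=\sum_m v_{j,m}^4$, and hence an unconditional supercritical claim. Stability of the matrix branches is only asserted.

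\textbf{Your reduction.} Your oblique spectral projector is the correct Lyapunov--Schmidt reduction for non-normal $W$. It recovers the paper's coefficient when $\tilde v_j=v_j$, and it gives an actual stability argument. It also shows that your added hypotheses are necessary rather than artifacts of your method:
\begin{itemize}
\item If the dominant eigenvalue of $W$ is a complex pair (typical for large Ginibre $W$) or negative real, then $I-DF(0)$ stays invertible at criticality. So $0$ is the only small fixed point, and the crossing is Neimark--Sacker or flip, not pitchfork.
\item If $\kappa<0$, the pitchfork is subcritical and there are no small nonzero fixed points for $\mu>0$.
\end{itemize}
In short, the paper's route buys brevity and a clean closed form under an implicit normality assumption. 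Yours proves a correct, conditional version of the statement.

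\textbf{A correction: your left/right labels are swapped.} Since $A(u_ky^\top)W=\lambda_k\,u_k\,(W^\top y)^\top$, the critical eigenvector of $Z\mapsto AZW$ is $u_k\tilde v_j^\top$, built from the \emph{left} eigenvector of $W$. For $W^\top\otimes A$ this is $\tilde v_j\otimes u_k=\mathrm{vec}(u_k\tilde v_j^\top)$. By contrast, $u_kv_j^\top$ is the eigenvector of the adjoint map $Z\mapsto AZW^\top$.

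As written, $u_kv_j^\top\notin\ker(I-DF(0))$ at criticality. Consequently your $P$ does not commute with $DF(0)$, and the transverse solution $h$ acquires an $O(a)$ component, so $h=O(a^3)$ fails.

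Swapping the roles of $v_j$ and $\tilde v_j$ throughout repairs everything:
\begin{itemize}
\item the pattern is $\pm a^\star u_k\tilde v_j^\top+O((a^\star)^3)$;
\item the projector is $P(Z)=(u_k^\top Z v_j)\,u_k\tilde v_j^\top$;
\item the cubic coefficient is $\kappa=\kappa_k\sum_m\tilde v_{j,m}^3v_{j,m}$, with $\tilde v_j^\top v_j=1$.
\end{itemize}
The paper makes the same conflation, which is harmless only when the left and right dominant eigenvectors coincide.
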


This extends our 1D analysis to realistic GNNs. The bifurcation occurs when the graph's dominant frequency ($\lambda_{k}$) and feature mixing ($\rho(W)$) combine to destabilize uniformity. Emergent patterns are separable states $u_k v_j^\top$ that align with both graph structure and feature space, showing how GNN dynamics naturally discover structured representations.


\subsection{A Bifurcation-Aware Initialization Scheme}

The bifurcation condition $\alpha\rho(W)\,\lambda_{k} = 1$ yields a \emph{practical initialization strategy}. Using random matrix theory, we can set the weight variance to place the GNN exactly at the critical point:

\begin{corollary}[Critical initialization]
\label{cor:init}
For Ginibre-type weights, the critical variance is
\begin{equation}\label{eq:vc-ginibre}
v_c \approx \frac{1}{d\lambda_{k}^2 \alpha^2}
\end{equation}
\end{corollary}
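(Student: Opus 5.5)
Starting from Theorem~\ref{thm:variance-bifurcation}, the critical condition is $\alpha\,\rho(W)\,\lambda_k = 1$. Corollary~\ref{cor:init} therefore amounts to computing the typical spectral radius $\rho(W)$ of a random $d\times d$ weight matrix as a function of its entry variance $v$, and then solving the critical condition for $v$. We take $W$ to be of Ginibre type: i.i.d.\ entries with mean zero and variance $v$, real or complex.

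\textbf{Step 1 (spectral radius from the circular law).} For an i.i.d.\ matrix with entry variance $v$, the rescaled matrix $W/\sqrt{v d}$ has empirical spectral distribution converging to the uniform law on the unit disk; this is Ginibre's law in the Gaussian case and Tao--Vu universality in general. Moreover, under a finite fourth moment (Bai--Yin / Geman), the spectral radius converges almost surely to the edge of that disk. Hence
\[
\rho(W) = \sqrt{v d}\,\bigl(1+o(1)\bigr), \qquad d\to\infty .
\]
We will simply cite this as a standard random-matrix fact, with the rate of convergence left at $o(1)$.

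\textbf{Step 2 (solve the critical condition).} Substituting Step 1 into $\alpha\rho(W)\lambda_k = 1$ gives $\alpha\lambda_k\sqrt{v_c d} = 1$. Squaring,
\[
v_c = \frac{1}{d\,\lambda_k^2\,\alpha^2},
\]
which is \eqref{eq:vc-ginibre}. The symbol ``$\approx$'' records the $1+o(1)$ finite-$d$ correction to the spectral radius.

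\textbf{Main obstacle.} Theorem~\ref{thm:variance-bifurcation} involves the largest eigenvalue in the sense of the one whose eigenvector $v_j$ bifurcates. A real Ginibre matrix, however, typically has complex eigenvalues of maximal modulus, and the real pitchfork of Theorem~\ref{thm:variance-bifurcation} requires a real dominant eigenvalue; a complex pair would instead produce oscillatory, Neimark--Sacker-type behaviour. I would handle this in the following way.

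First, interpret $\rho(W)$ as the spectral radius. This is exactly the quantity that controls loss of linear stability of $X=0$: the linearization is $X\mapsto \alpha A X W$, whose eigenvalues are the products $\alpha\lambda_i\nu_j$, so the threshold is $\alpha\lambda_k\rho(W)=1$ regardless of whether the dominant $\nu_j$ is real.

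Second, note that the real-eigenvalue edge of real Ginibre matrices also sits at $\sqrt{vd}$ to leading order. So even restricted to real eigenvalues, $v_c$ is unchanged at leading order, and the corollary's statement, as an approximate critical variance, is unaffected.

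If one wants to avoid the issue entirely, an alternative is to take $W$ symmetric (GOE-type). Then the semicircle edge is at $2\sqrt{vd}$, which would give $v_c = 1/(4d\lambda_k^2\alpha^2)$. This confirms that the constant depends on the choice of ensemble, and the Ginibre choice in the statement is what yields the formula exactly as written.
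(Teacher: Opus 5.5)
Your proposal is correct and follows the paper's proof essentially verbatim: take the critical condition $\alpha\rho(W)\lambda_k=1$ from Theorem~\ref{thm:variance-bifurcation}, substitute the circular-law estimate $\rho(W)\approx\sqrt{dv}$, and solve for $v$. Your extra remarks are sound refinements that the paper's argument glosses over, but they do not change the route: what is actually needed is convergence of the spectral radius itself, not only of the empirical spectral distribution, and real Ginibre matrices can have complex dominant eigenvalues.
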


The proof is given in Appendix~\ref{cor:init}. This provides a simple recipe: compute $\lambda_{k}$ and use \eqref{eq:vc-ginibre} to initialize at the bifurcation point. Unlike standard heuristic schemes that preserve signal variance, this "bifurcation-aware" initialization aligns with the system's stability landscape.~\citep{goto2024initialization}.

\subsection{Learnable polynomial spectral filters}

\begin{figure*}[t]
    \centering
    \includegraphics[width=0.85\textwidth]{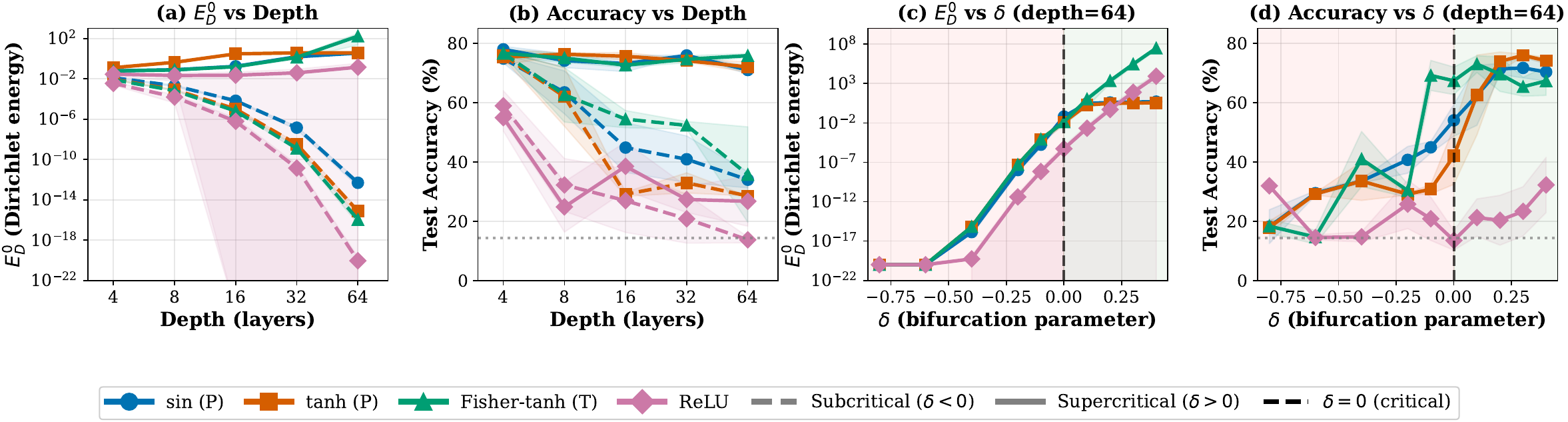}\hfill%
    
    \caption{\textbf{Depth Robustness and Phase Transitions on CORA.}
    \textbf{(a, b)} $E_{D}^{0}$ and accuracy vs. depth. Activations with stabilizing cubic terms (sin, tanh) resist oversmoothing up to 64 layers, whereas ReLU collapses.
   \textbf{(c, d)} Phase transition at depth 64. Varying the bifurcation parameter $\delta$ reveals a transition near $\delta=0$; only supercritical initialization ($\delta > 0$) enables pattern formation and successful learning.}
    \label{fig:depth_experiments}
\end{figure*}

By Theorem~\ref{thm:variance-bifurcation}, the activation substitution already guarantees escape from the trivial fixed point; the bifurcation then selects, by default, the dominant eigenmode of $A$ (typically low-frequency, per Corollary~\ref{cor:ntk}). This default prior suits homophilic tasks well (e.g. CORA, Figure~\ref{fig:depth_experiments}), but others benefit from a different critical mode. We therefore add a second, independent lever that controls \emph{which} mode survives, without altering the anti-collapse guaranty. Concretely, we replace the fixed operator $A$ with a learnable polynomial filter $P(A) = \sum_{k=0}^{K} c_k A^k$, whose trainable coefficients $\{c_k\}$ reshape the spectrum so that the task-relevant mode is the one that bifurcates (Figure~\ref{fig:pattern_selection}). We pair this with a sine activation in two variants: \emph{Sine-Poly($A$)-sh}, which shares a single polynomial filter across all layers, and \emph{Sine-Poly($A$)-pl}, which uses independent per-layer filters with layer-specific coefficients $\{c_k^{(\ell)}\}$. Each layer applies the polynomial aggregation, a linear transformation, and a sine activation.

\begin{table*}[t]
\centering
\caption{\textbf{Node classification accuracy (\%) on benchmark datasets.}
Results are reported as mean $\pm$ standard deviation across multiple runs.
Datasets are ordered by increasing homophily levels, where higher values indicate more homophilic graphs.
\textcolor{gold}{\textbf{Gold}}, \textcolor{silver}{\textbf{silver}}, and \textcolor{bronze}{\textbf{bronze}} indicate 1st, 2nd, and 3rd best results. Results for baseline methods on ogbn-ArXiv are taken from the corresponding papers.}
\label{tab:benchmark_results}
\vspace{0.3em}
\resizebox{\textwidth}{!}{
\begin{tabular}{l|ccccc|ccccccc}
\toprule
\textbf{Method} & \textbf{Texas} & \textbf{Wisconsin} & \textbf{Squirrel} & \textbf{Chameleon} & \textbf{Cornell} & \textbf{Computer} & \textbf{CiteSeer} & \textbf{PubMed} & \textbf{Cora} & \textbf{CoauthorCS} & \textbf{Photo} & \textbf{ogbn-ArXiv} \\
\midrule
\textit{Homophily} & \textit{0.11} & \textit{0.21} & \textit{0.22} & \textit{0.23} & \textit{0.30} & \textit{0.78} & \textit{0.74} & \textit{0.80} & \textit{0.81} & \textit{0.81} & \textit{0.83} & \text{0.83} \\
\midrule
GCN & 55.14 $\pm$ 5.16 & 51.76 $\pm$ 3.06 & 53.43 $\pm$ 2.01 & 64.82 $\pm$ 2.24 & 60.54 $\pm$ 5.30 & 82.6 $\pm$ 2.4 & 76.50 $\pm$ 1.36 & 88.42 $\pm$ 0.50 & 86.98 $\pm$ 1.27 & 91.1 $\pm$ 0.5 & 91.2 $\pm$ 1.2 & 72.17 $\pm$ 0.33 \\
GCN + PairNorm & 60.27 $\pm$ 4.34 & 48.43 $\pm$ 6.14 & 50.44 $\pm$ 2.04 & 62.74 $\pm$ 2.82 & 58.92 $\pm$ 3.15 & 73.59 $\pm$ 1.47 & -- & 87.53 $\pm$ 0.44 & 85.79 $\pm$ 1.01 & -- & -- & -- \\
GraphSAGE & 82.43 $\pm$ 6.14 & 81.18 $\pm$ 5.56 & 41.61 $\pm$ 0.74 & 58.73 $\pm$ 1.68 & 75.95 $\pm$ 5.01 & 82.4 $\pm$ 1.8 & 76.04 $\pm$ 1.30 & 88.45 $\pm$ 0.50 & 86.90 $\pm$ 1.04 & 91.3 $\pm$ 2.8 & 91.4 $\pm$ 1.3 & --\\
GAT & 52.16 $\pm$ 6.63 & 49.41 $\pm$ 4.09 & 40.72 $\pm$ 1.55 & 60.26 $\pm$ 2.50 & 61.89 $\pm$ 5.05 & 78.0 $\pm$ 19.0 & 76.55 $\pm$ 1.23 & 87.30 $\pm$ 1.10 & 86.33 $\pm$ 0.48 & 90.5 $\pm$ 0.6 & 85.7 $\pm$ 20.3 & \textcolor{gold}{\textbf{73.65 $\pm$ 0.11}} \\
CGNN & 71.35 $\pm$ 4.05 & 74.31 $\pm$ 7.26 & 29.24 $\pm$ 1.09 & 46.89 $\pm$ 1.66 & 66.22 $\pm$ 7.69 & 80.29 $\pm$ 2.0 & 76.91 $\pm$ 1.81 & 87.70 $\pm$ 0.49 & 87.10 $\pm$ 1.35 & 92.3 $\pm$ 0.2 & 91.39 $\pm$ 1.5 & 57.80 $\pm$ 2.5 \\
GRAND & 75.68 $\pm$ 7.25 & 79.41 $\pm$ 3.64 & 40.05 $\pm$ 1.50 & 54.67 $\pm$ 2.54 & 74.59 $\pm$ 4.04 & 69.12 $\pm$ 0.41 & 76.46 $\pm$ 1.77 & 89.02 $\pm$ 0.51 & 87.36 $\pm$ 0.96 & 92.89 $\pm$ 0.30 & 84.64 $\pm$ 0.25 & 72.23 $\pm$ 0.20 \\
GREAD & 84.59 $\pm$ 4.53 & 85.29 $\pm$ 4.49 & \textcolor{bronze}{\textbf{58.62 $\pm$ 1.08}} & \textcolor{bronze}{\textbf{70.00 $\pm$ 1.70}} & 73.78 $\pm$ 4.53 & 69.89 $\pm$ 0.45 & 77.09 $\pm$ 1.78 & \textcolor{silver}{\textbf{90.01 $\pm$ 0.42}} & 88.16 $\pm$ 0.81 & 92.93 $\pm$ 0.13 & 85.81 $\pm$ 0.63 & -- \\
SGOS-Expn & 88.65 $\pm$ 2.64 & \textcolor{gold}{\textbf{88.82 $\pm$ 3.62}} & 50.83 $\pm$ 1.69 & 69.74 $\pm$ 1.26 & \textcolor{bronze}{\textbf{76.48 $\pm$ 2.71}} & 70.35 $\pm$ 0.24 & 77.48 $\pm$ 1.26 & \textcolor{gold}{\textbf{90.08 $\pm$ 0.49}} & \textcolor{bronze}{\textbf{88.31 $\pm$ 0.85}} & \textcolor{bronze}{\textbf{93.71 $\pm$ 0.23}} & 85.49 $\pm$ 0.98 & --\\
\midrule\midrule
ReLU-Poly(A)-sh & \textcolor{silver}{\textbf{92.79 $\pm$ 0.1}} & \textcolor{silver}{\textbf{86.27}} $\pm$ 0.1 & 53.57 $\pm$ 0.82 & 69.63 $\pm$ 0.35 & 67.57 $\pm$ 0.2 & 91.35 $\pm$ 0.55 & 77.56 $\pm$ 0.66 & 88.44 $\pm$ 0.07 & 89.33 $\pm$ 0.38 & 93.43 $\pm$ 0.09 & 94.27 $\pm$ 0.11 & 72.95 $\pm$ 0.15 \\
ReLU-Poly(A)-pl & \textcolor{bronze}{\textbf{90.09 $\pm$ 0.1}} & 84.31 $\pm$ 0.0 & 53.67 $\pm$ 0.46 & 69.89 $\pm$ 0.49 & \textcolor{silver}{\textbf{76.58 $\pm$ 0.1}} & \textcolor{bronze}{\textbf{91.43 $\pm$ 0.33}} & \textcolor{bronze}{\textbf{77.68 $\pm$ 0.74}} & 88.29 $\pm$ 0.18 & 89.33 $\pm$ 0.20 & 93.47 $\pm$ 0.09 & \textcolor{bronze}{\textbf{94.24 $\pm$ 0.12}} & \textcolor{bronze}{\textbf{73.17 $\pm$ 0.22}} \\
\midrule
Sine-Poly(A)-sh & 85.59 $\pm$ 0.1 & \textcolor{bronze}{\textbf{85.62 $\pm$ 0.0}} & \textcolor{silver}{\textbf{60.81 $\pm$ 0.56}} & \textcolor{silver}{\textbf{70.81 $\pm$ 0.24}} & \textcolor{silver}{\textbf{76.58 $\pm$ 0.1}} & \textcolor{silver}{\textbf{91.56 $\pm$ 0.32}} & \textcolor{gold}{\textbf{78.18 $\pm$ 1.01}} & \textcolor{bronze}{\textbf{89.71 $\pm$ 0.09}} & \textcolor{silver}{\textbf{89.78 $\pm$ 0.20}} & \textcolor{gold}{\textbf{95.84 $\pm$ 0.10}} & \textcolor{gold}{\textbf{95.69 $\pm$ 0.17}} & \textcolor{silver}{\textbf{73.29 $\pm$ 0.8}} \\
Sine-Poly(A)-pl & \textcolor{gold}{\textbf{93.69 $\pm$ 0.2}} & 80.39 $\pm$ 0.0 & \textcolor{gold}{\textbf{61.79 $\pm$ 0.40}} & \textcolor{gold}{\textbf{70.85 $\pm$ 0.92}} & \textcolor{gold}{\textbf{76.87 $\pm$ 0.1}} & \textcolor{gold}{\textbf{91.94 $\pm$ 0.07}} & \textcolor{silver}{\textbf{78.15 $\pm$ 0.75}} & 88.67 $\pm$ 0.27 & \textcolor{gold}{\textbf{89.89 $\pm$ 0.50}} & \textcolor{silver}{\textbf{95.42 $\pm$ 0.07}} & \textcolor{silver}{\textbf{95.55 $\pm$ 0.21}} & 72.77 $\pm$ 0.34 \\
\bottomrule
\end{tabular}
}
\vspace{-0.5em}
\end{table*}
\section{Experiments}
\label{sec:experiments}

Our experiments\footnote{The code used to reproduce the following results can be found here: https://github.com/maysambehmanesh/relu-bifurcation} are conducted using a standard baseline GCN architecture and consist of verifying the following statements:
\begin{itemize}
    
    \item  \textbf{Isolation test for oversmoothing:}: Using a node-classification task with vanilla GNNs at increasing depth, we show that the activation substitution alone, with our derived initialization, and without any spectral filtering, lets deep GNNs break oversmoothing, while ReLU collapses.
    
\item \textbf{Node classification results:} We then add mode selection as a separate lever. Equipped with learnable polynomial mode selection, these architectures are competitive on node-classification benchmarks across all homophily levels. Mode selection does not affect the anti-collapse property; it only controls which topological mode the prior aligns with, and the proposed activations do not cripple expressivity in standard shallow tasks.

    
    \item \textbf{Ablation of activation functions:} Lastly, we ablate the construction of our best performing architecture -- Sine-Poly(A) -- by removing the polynomial filtering and changing the activation function. Our results demonstrate the superiority of the Sine activation function, validating the theoretical results, as well as the need for polynomial filtering in practical applications.
\end{itemize}


\subsection{Robustness to depth.}
To validate the resistance of specific activations to oversmoothing, we test our bifurcation-aware initialization with a controlled variance $v_c(\delta)=\frac{1+\delta}{d\lambda_{k}^2 \alpha^2}$ for depth robustness on the \textsc{Cora} dataset. Figure~\ref{fig:depth_experiments}(a,b) shows that with supercritical initialization ($\delta>0$), bifurcating activations (sin, tanh, Fischer-tanh) maintain high initial Dirichlet energy $E_D^0$ and stable accuracy up to 64 layers, while the latter collapses for ReLU. In Figure~\ref{fig:depth_experiments}(c,d), with a 64-layer GNN, a parameter sweep on $\delta$ reveals a phase transition near $\delta=0$: accuracy is low in subcritical regime, while supercritical initialization sustains high performance for bifurcating activations. It should be noted that the transition is not as sharp as in the toy model, because random matrix theory establishes deterministic results in the limit of large embedding dimension. 

Interestingly, ReLU in the supercritical regime maintains a non-zero $E_D^0$, but poor accuracy. This is consistent with our theory: ReLU lacks the stabilizing cubic term ($\gamma = \phi'''(0) < 0$) required to align with graph eigenmodes. In contrast, bifurcating activations provide both the instability to escape the homogeneous state \emph{and} stabilization towards expressive structured patterns. 
\begin{remark}
This shows that breaking contractivity alone is insufficient to mitigate oversmoothing: higher-order nonlinear terms are essential for stabilizing informative patterns.
\end{remark}

\subsection{Node Classification}

Table~\ref{tab:benchmark_results} presents results across benchmarks of varying size and homophily. Comparisons with state-of-the-art models illustrate that our approach yields competitive results without sophisticated architectures, across all levels of homophily. Details of the experiments in Appendix \ref{app-detailExperiments}. 

We emphasize that these benchmarks evaluate shallow networks where oversmoothing is not the limiting factor. Our goal here is not to achieve state-of-the-art performance, but to demonstrate that activations do not compromise expressivity in standard settings. The key validation of our theory lies in the deep regime (Figure~\ref{fig:depth_experiments}), where sine, tanh and Fischer-tanh activated networks maintain performance at 64 layers while ReLU-based counterparts collapse.

\subsection{Ablation}

\begin{table}[t]
\centering

\begin{tabular}{lc}
\toprule
\textbf{Method} & \textbf{Mean Rank} \\
\midrule
Sine-Poly(A)-sh & 3.00 $\pm$ 2.13 \\
Sine-Poly(A)-pl & 3.18 $\pm$ 3.31 \\
GELU-Poly(A)-pl & 4.82 $\pm$ 1.78 \\
GELU-Poly(A)-sh & 5.32 $\pm$ 2.56 \\
ReLU-Poly(A)-pl & 6.64 $\pm$ 2.41 \\
ReLU-Poly(A)-sh & 6.91 $\pm$ 3.56 \\
Swish-Poly(A)-pl & 7.36 $\pm$ 2.26 \\
Swish-Poly(A)-sh & 7.59 $\pm$ 2.35 \\
Sine GCN & 10.73 $\pm$ 6.08 \\
\bottomrule
\end{tabular}
\caption{\textbf{Ablation by average rank ($\pm$ std) across benchmark datasets.} Methods are ranked by accuracy on each dataset (1 = best), and the average rank is computed across all datasets except large-scale OGBN-arXiv. Lower is better.}
\label{tab:ablation_small}
\end{table}

We ablate the activation and the polynomial filtering on the node-classification task to single out their performance contribution. Namely, beyond the ReLU-Poly and Sine-Poly activation and Sine-GCN without polynomial filtering, we also try alternative GeLU-Poly and Swish-Poly, where we replace the base activation functions with Gaussian Error Linear Units (GELU)~\cite{hendrycks2016gaussian} and Swish~\cite{ramachandran2017searching}. The summary results are shown in Table~\ref{tab:ablation_small}.  Notably, our proposed Sine-Poly(A) is consistently among the best choices across the benchmarks, and the polynomial filtering is key to this choice, as only replacing the activation (Sine-GCN) yields similar results as the vanilla GCN. Moreover, our results hold for all alternative activation functions, with competitive results even with GELU or Swish (those alternatives are ranked just after the Sine-Poly proposal). This ablation empirically validates the robust combination of bifurcating activations with polynomial filter models to gain expressivity on real datasets while mitigating oversmoothing. The full ablation comparison is available in Appendix Table~\ref{tab:ablation_full}.

\section{Conclusion}

This work reframes oversmoothing as a dynamical stability problem. We demonstrated that more general activations can instantiate bifurcating systems: specifically, replacing standard ReLU with functions with stabilizing cubic nonlinearities enables deep GNNs to stabilize non-homogeneous patterns rather than collapsing to triviality. Our analysis, although local, yields precise amplitude scaling laws and a closed-form initialization strategy that positions networks at criticality, creating a strong topological prior. Finally, we show that these activation functions do not come at the expense of poor expressivity in standard tasks. In fact, combined with polynomial mode filtering, these architectures yield robust and competitive results on a node-classification benchmark.

A current limitation is our reliance on fixed graph topologies: while we showed that filters can \emph{select} specific eigenmodes as priors, the network is restricted to the static "menu" of the input graph. A promising future direction is to extend this bifurcation framework to learnable graph architectures such as attention-based mechanisms~\cite{velivckovic2018graph}. By coupling feature dynamics with an evolving topology, models could go beyond selecting static priors to actively \textbf{sculpting the eigenmodes themselves}.

\section*{Impact Statement}
This paper presents work whose goal is to advance the field of machine learning. There are many potential societal consequences of our work, none of which we feel must be specifically highlighted here.

\section*{Acknowledgments}
Parts of this work were supported by the ERC Consolidator Grant 101087347 (VEGA), as well as gifts from Ansys Inc., and Adobe Research.

\bibliography{biblio}
\bibliographystyle{icml2026}

\newpage
\appendix
\onecolumn
\section{Theoretical Results and Proofs}

\noindent
This appendix gathers the analytical background and detailed proofs
supporting the theoretical results presented in Sections~\ref{main:dynsys},\ref{main:prior} and~\ref{main:real} of the main text.
It is organized to guide the reader through the minimal tools and reasoning
underlying the bifurcation analysis of Graph Neural Networks (GNNs).

\ref{app:eff_pot_stab} shows how the linear mapping of Eq.\ref{eq:phi_map} can be translated into a dynamical system problem, with an effective potential which dictates its stability. \ref{app:eff_potential} provides an intuitive potential landscape interpretation of the GNN nonlinear steady states and their stability transitions. \ref{app:analogy_phase_trans} provides an analogy of the problem with phase transition theory.

The following subsections provide full proofs of the main results: \ref{app:proof-spontaneous} proves
Theorem~\ref{thm:pitchfork_general} (spontaneous symmetry breaking in a single-layer GNN), \ref{app:proof-dirichlet} demonstrates the Dirichlet energy bifurcation of Corollary ~\ref{cor:dirichlet}
and \ref{app:eff_pot-simplified} for the effective potential energy formulation of Corollary~\ref{cor:eff_pot}).

The generalization to the multi-feature, random-weight extension in
Theorem~\ref{thm:variance-bifurcation} is proven in \ref{app:main_real}. Section~\ref{app:rmt} summarizes the classical results of random matrix theory that determine the spectral scaling and critical variance conditions for random feature weights in Theorem~\ref{thm:variance-bifurcation} and Corollary~\ref{cor:init}.
Together, these results establish a unified bifurcation-theoretic
understanding of how graph structure, nonlinearities, and initialization
interact to determine the emergence of structured patterns in deep GNNs.


\subsection{Stability and effective potential}
\label{app:eff_pot_stab}

We view the iterative layer update as a discrete gradient descent step on the effective potential $V(x)$. From Equation 3, we defined the potential such that $-V'(x) = \phi(wx) - x$.

This means the layer update fundamentally pushes the state $x$ in the direction of the negative gradient of $V$:
$$x^{(l+1)} - x^{(l)} = -V'(x^{(l)})$$

To rigorously illustrate why local minima are stable and maxima are unstable, it is helpful to look at the continuous-time analog of this system: 
$$\frac{dx}{dt} = -V'(x)$$

Let $x^*$ be a fixed point, meaning $V'(x^*) = 0$. If we introduce a small perturbation $\delta x(t) = x(t) - x^*$ and take the second-order Taylor expansion of the potential around $x^*$, we get:
$$V(x) \approx V(x^*) + V'(x^*)\delta x + \frac{1}{2}V''(x^*)(\delta x)^2$$

Since $V'(x^*) = 0$, the gradient of the potential near the fixed point is linearly approximated by $V'(x) \approx V''(x^*)\delta x$. Substituting this into our continuous-time dynamics yields the linear ordinary differential equation:
$$\frac{dx}{dt} = \frac{d(\delta x)}{dt} = -V''(x^*)\delta x$$

The solution to this ODE governs the behavior of the perturbation over time:
$$\delta x(t) = \delta x(0) e^{-V''(x^*)t}$$

This explicitly demonstrates the stability criterion based on the curvature $V''(x^*)$:
\begin{itemize}
    \item \textbf{Local Minimum ($V''(x^*) > 0$):} The exponent $-V''(x^*)$ is negative, causing the perturbation to \textbf{decay exponentially} to zero. The system returns to $x^*$, making it a \textbf{stable} fixed point.
    \item \textbf{Local Maximum ($V''(x^*) < 0$):} The exponent $-V''(x^*)$ is positive, causing the perturbation to \textbf{diverge exponentially}. The system is pushed away from $x^*$, making it an \textbf{unstable} fixed point.
\end{itemize}

\subsection{Stable states of the effective potential}
\label{app:eff_potential}
The effective potential offers an intuitive and compact way to describe
the appearance and stability of new steady states in systems
depending on a control parameter (here $w$).
The idea is to represent the effective dynamics of a scalar order parameter $a$ by an energy function (or Landau energy) $V(a;w)$ whose minima correspond to the stable equilibria.

\paragraph{Basic construction.}
When the reduced fixed-point condition can be written as
\[
0 = A(w) a + B(w) a^3 + O(a^5),
\]
with $B(w)>0$, one defines the Landau energy
\[
V(a;w)
= \frac{A(w)}{2}\,a^2 + \frac{B(w)}{4}\,a^4 + O(a^6),
\qquad -\frac{\partial V}{\partial a}=0
\]
as the equilibrium condition.
The shape of $V$ changes qualitatively when $A(w)$ changes sign.

\paragraph{Interpretation.}
\begin{itemize}
\item $A(w)>0$ — single minimum at $a=0$ (symmetric or homogeneous phase);
\item $A(w)<0$ — two symmetric minima at $a=\pm a_\star$ (symmetry-broken phase),
  with $a_\star = \sqrt{-A(w)/B(w)} + O(|A|^{3/2})$.
\end{itemize}
The coefficient $A(w)$ acts as the “distance to criticality,”
and $a$ serves as the \emph{order parameter}.
In our analysis, $A(w)= \mu = \alpha w\lambda_k-1$ and $B(w)=\frac{-\gamma(w\lambda_k)^3\kappa_k}{6}$,
so the transition occurs at $w\lambda_k=1$.

\paragraph{Intuitive role in this paper.}
The Landau energy condenses the nonlinear dynamics of the GNN layer
into a scalar potential:
\[
V(a;w)=-\frac{\alpha w\lambda_k-1}{2}a^2 + \frac{\gamma (w\lambda_k)^3\kappa_k}{24}a^4.
\]
Its curvature change at $\alpha w\lambda_k=1$ signals the loss of stability of the
homogeneous state and the emergence of two symmetry-related minima.
This “energy landscape” picture is directly analogous to
a continuous phase transition or spontaneous magnetization in physics,
and provides an intuitive way to visualize the bifurcation results
proved in Theorem~3.2 and Corollary~3.3.

\subsection{Analogy with phase transitions}
\label{app:analogy_phase_trans}
This linear onset of energy is directly analogous to a \textbf{second-order (continuous) phase transition} in statistical physics.
In the Ising or ferromagnetic Landau picture, the magnetization $M$ satisfies
\[
M \propto \sqrt{T_c - T},
\]
below the critical temperature $T_c$, while the free energy increases quadratically with $(T_c - T)$.
Here, the \textbf{amplitude} $a_\star$ plays the role of the spontaneous magnetization and the \textbf{Dirichlet energy} $E_D \propto a_\star^2$ behaves like the square of that order parameter, growing linearly with the distance $\mu=\alpha w\lambda_k-1$ from criticality.

Consequently, the trivial homogeneous state corresponds to the \textbf{disordered phase}, while the bifurcated non-homogeneous states $x_\pm^\star$ correspond to two \textbf{symmetry-broken phases} (positive and negative magnetization).
This analogy naturally motivates the \textbf{Landau energy functional} introduced in the next result (Corollary~\ref{cor:eff_pot}), which provides a potential-energy landscape whose minima reproduce this bifurcation behavior.

\subsection{Proof of Theorem \ref{thm:pitchfork_general}}
\label{app:proof-spontaneous}

\begin{proof}
We prove the result in five steps, following a Lyapunov--Schmidt reduction.

\paragraph{Step 1 (Mode decomposition).}
Let $A$ be the normalized graph adjacency matrix with eigenpairs $(\lambda_r, u_r)$, where
\[
0 = \lambda_0 < \lambda_1 \le \dots \le \lambda_{k-1} < \lambda_k
\]
and assume $\lambda_k$ is simple. Fix $w>0$ and consider fixed points of
\[
x = \phi(w A x).
\]
Where we assume the activation $\phi$ is \emph{odd} and
$\phi'(0)=\alpha>0, \phi'''(0)=-\gamma<0$. Let us decompose $x$ along $u_k$ and its orthogonal complement:
\[
x = a u_k + y, \qquad a \in \mathbb{R}, \quad y \in \mathcal{X}_\perp := \{u_k\}^\perp.
\]
Let $P = u_k u_k^\top$ and $Q = I - P$. Projecting the fixed-point equation onto $\mathrm{span}\{u_k\}$ and $\mathcal{X}_\perp$ yields
\begin{align}
\label{eq:center-eq-app}
\langle u_k, \phi(w A (a u_k + y)) \rangle &= a, \\
\label{eq:trans-eq-app}
y &= Q \phi\bigl(w A (a u_k + y)\bigr).
\end{align}
We will solve \eqref{eq:trans-eq-app} for $y=y(a,w)$ and insert it into \eqref{eq:center-eq-app}.

\paragraph{Step 2 (Transverse equation).}
Since $y \perp u_k$, the relevant linear operator on $\mathcal{X}_\perp$ is $QAQ^{T}$, whose spectrum is $\{\lambda_r : r \neq k\}$. Choose $w$ in a one-sided neighborhood of
\[
w_k := \frac{1}{\lambda_k}
\]
such that
\begin{equation}
\label{eq:trans-stability-cond}
w \lambda_r < 1 \quad \text{for all } r \neq k.
\end{equation}
This is possible precisely because we assumed $\lambda_k$ is the largest Laplacian eigenvalue: then $w \approx 1/\lambda_k$ automatically satisfies $w \lambda_r \le w \lambda_{k-1} < 1$ for all $r \neq k$.
Under \eqref{eq:trans-stability-cond}, the operator $(I - w QAQ^{T})$ is invertible on $\mathcal{X}_\perp$ and
\[
\|(I - w QAQ^{T})^{-1}\| \le \frac{1}{1 - w k_{r \neq k} \lambda_r}.
\]

Expand the activation entrywise:
\[
\phi(z) = \alpha z + \frac{\gamma}{6} z^3 + R_5(z), \qquad |R_5(z)_i| \le C |z_i|^5,
\]
since $\phi$ is odd and use $L u_k = \lambda_k u_k$ to rewrite \eqref{eq:trans-eq-app} as
\begin{align*}
y
&= Q \Bigl( \alpha w A (a u_k + y)  - \frac{\gamma w^3}{6} (A(a u_k + y))^{\odot 3} + R_5\bigl(w A (a u_k + y)\bigr) \Bigr) \\
&= \alpha w Q A Q^{T}\, y \;- \; \frac{\gamma w^3}{6} Q \bigl( (\lambda_k a u_k + A y)^{\odot 3} \bigr) \;+\; Q R_5\bigl(w A (a u_k + y)\bigr).
\end{align*}
Bring the linear term to the left:
\[
(I - \alpha w QAQ^{T}) y
= - \frac{\gamma w^3}{6} Q \bigl( (\lambda_k a u_k + A y)^{\odot 3} \bigr) + Q R_5\bigl(w A (a u_k + y)\bigr).
\]
Apply $(I - w QAQ^{T})^{-1}$ to obtain
\[
y = \mathcal{T}(y; a, w),
\]
where $\mathcal{T}$ collects the cubic and higher-order terms. Using $\|y\|$ as the norm on $\mathcal{X}_\perp$, the bounds on $R_5$, and that $y$ will turn out to be of order $a^3$, one checks that for $|a|$ small enough, $\mathcal{T}$ maps a ball $\{y : \|y\| \le C |a|^3\}$ into itself and is a contraction. Hence, for $|a|$ small and $w$ close to $w_k$, \eqref{eq:trans-eq-app} admits a unique solution
\[
y = y(a,w) \in \mathcal{X}_\perp
\]
such that
\[
\|y(a,w)\| \le C |a|^3.
\]

\paragraph{Step 3 (Reduced scalar equation).}
Define
\[
G(a,w) := \big\langle u_k, \phi\bigl(w A (a u_k + y(a,w))\bigr) \big\rangle - a.
\]
We now expand $G$ in $a$ near $(a,w) = (0,w_k)$.
Using again the Taylor expansion of $\phi$ and $A u_k = \lambda_k u_k$,
\begin{align*}
\big\langle u_k, \phi(w A (a u_k + y)) \big\rangle
&= \langle u_k, \alpha w A (a u_k + y) \rangle
   - \frac{\gamma}{6} \langle u_k, (w A (a u_k + y))^{\odot 3} \rangle
   + O(\|a\|^5) \\
&= \alpha w \lambda_k a
   - \frac{\gamma w^3}{6} \sum_{i=1}^n \bigl(\lambda_k a u_{k,i} + (Ay)_i \bigr)^3 u_{k,i}
   + O(a^5).
\end{align*}
Since $y = O(a^3)$, all cross-terms involving $Ay$ are $O(a^5)$ and can be absorbed in the remainder. The leading cubic term is
\[
\sum_{i=1}^n (\lambda_k a u_{k,i})^3 u_{k,i}
= \lambda_k^3 a^3 \sum_{i=1}^n u_{k,i}^4
= \lambda_k^3 a^3 \kappa_k,
\]
where $\kappa_k := \sum_{i=1}^n u_{k,i}^4 > 0$. Therefore,
\[
G(a,w)
= (\alpha w \lambda_k - 1) a
   - \frac{\gamma w^3 \lambda_k^3}{6} \kappa_k a^3
  + R(a,w),
\qquad |R(a,w)| \le C |a|^5.
\]

\paragraph{Step 4 (Nontrivial solutions and amplitude).}
Factor
\[
G(a,w) = a\,H(a,w), \qquad
H(a,w) = (\alpha w \lambda_k - 1) - \frac{\gamma w^3 \lambda_k^3}{6} \kappa_k a^2 + O(a^4).
\]
Let $\mu := \alpha w \lambda_k - 1$. For $w > w_k$ we have $\mu > 0$ and $H(0,w) = \mu > 0$. For $a$ large enough (but still small in absolute terms), the negative quadratic term dominates, so $H(a,w) < 0$. By continuity, there exists $a_\star^+(w) > 0$ such that $H(a_\star^+(w),w) = 0$, hence $G(a_\star^+(w),w)=0$. By oddness in $a$, there is a symmetric solution $a_\star^-(w) = -a_\star^+(w)$.
Solving the leading-order balance
\[
\mu - \frac{\gamma w^3 \lambda_k^3}{6} \kappa_k a^2 = 0
\]
gives
\[
a_\star(w)
= \sqrt{\frac{6 \mu}{\gamma w^3 \lambda_k^3 \kappa_k}} \;+\; O(\mu^{3/2}).
\]
Thus the nontrivial fixed points are
\[
x_\pm^\star(w) = \pm a_\star(w) u_k + O(a_\star(w)^3),
\]
which is the claimed square-root bifurcation.

\paragraph{Step 5 (Local stability).}
Let $x^\star$ be either branch. The Jacobian of the map $x \mapsto \phi(w A x)$ at $x^\star$ is
\[
J^\star = \operatorname{diag}(\phi'(w A x^\star)) \, w A.
\]
Because $x^\star = \pm a_\star u_k + O(a_\star^3)$, we have
\[
w A x^\star = w \lambda_k a_\star u_k + O(a_\star^3).
\]
Expanding $\phi'(z) = \alpha - \frac{\gamma}{2} z^2 + O(z^4)$ entrywise and projecting onto $u_k$ gives the multiplier in the critical direction:
\begin{align*}
m_k
&= \langle u_k, J^\star u_k \rangle \\
&= w \lambda_k \sum_{i=1}^n \Bigl(\alpha - \tfrac{\gamma}{2} (w \lambda_k a_\star u_{k,i})^2 + O(a_\star^4) \Bigr) u_{k,i}^2 \\
&= \alpha w \lambda_k - \frac{\gamma w^3 \lambda_k^3}{2} \kappa_k a_\star^2 + O(a_\star^4).
\end{align*}
Substitute $a_\star^2 = \frac{6 \mu}{\gamma w^3 \lambda_k^3 \kappa_k} + O(\mu^2)$:
\[
m_k = (1+\mu) - 3 \mu + O(\mu^2) = 1 - 2 \mu + O(\mu^2),
\]
so for $0 < \mu \ll 1$ we have $|m_k| < 1$. For the transverse modes $r \neq k$, the multipliers are
\[
m_r = w \lambda_r + O(a_\star^2),
\]
and by \eqref{eq:trans-stability-cond} these satisfy $|m_r| < 1$ for $w$ close enough to $w_k$. Hence the spectral radius of $J^\star$ is $<1$, and the two nontrivial fixed points are locally exponentially stable.
\end{proof}

\subsection{Proof of Corollary \ref{cor:dirichlet} (Dirichlet energy bifurcation)}
\label{app:proof-dirichlet}

\begin{proof}
Let $x_\pm^\star(w)$ denote the two nontrivial fixed points of Theorem~\ref{thm:pitchfork_general} for $w>w_k=1/\lambda_k$.
Recall that
\[
x_\pm^\star(w)
= \pm a_\star(w)\,u_k + O(a_\star(w)^3),
\qquad
a_\star(w)
= \sqrt{\frac{6 \mu}{\gamma w^3 \lambda_k^3 \kappa_k}} + O(\mu^{3/2}),
\]
where $\mu := \alpha w \lambda_k - 1$ and $\kappa_k=\sum_i u_{k,i}^4>0$.

\paragraph{Step 1 (Energy of the homogeneous state).}
For the trivial fixed point $x_0^\star=0$,
\[
E_D(x_0^\star) = (x_0^\star)^\top L x_0^\star = 0.
\]
Thus the Dirichlet energy vanishes for all $w\le w_k$, corresponding to the completely homogeneous (oversmoothed) regime.

\paragraph{Step 2 (Energy of the bifurcated branch).}
Substituting $x_\pm^\star(w)$ into the quadratic form $E_D(x)=x^\top L x$ gives
\begin{align*}
E_D(x_\pm^\star)
&= (\pm a_\star u_k + O(a_\star^3))^\top L (\pm a_\star u_k + O(a_\star^3)) \\
&= a_\star^2\,u_k^\top L u_k + O(a_\star^4)\\
&= a_\star^2\,u_k^\top (I-A) u_k + O(a_\star^4)
   = (1-\lambda_k) a_\star^2 + O(a_\star^4).
\end{align*}
Using $a_\star^2 = \frac{6 \mu}{\gamma w^3 \lambda_k^3 \kappa_k} + O(\mu^2)$, we obtain
\[
E_D(x_\pm^\star)
= \frac{6(1-\lambda_k)\mu}{\gamma (w \lambda_k)^3 \kappa_k} + O(\mu^2).
\]

\paragraph{Step 3 (Scaling law).}
Because $\lambda_k>0$, $\kappa_k>0$, and $\mu>0$ for $w>w_k$, the Dirichlet energy increases linearly with $\mu$:
\[
E_D(x_\pm^\star)
=
\begin{cases}
0, & w\le w_k,\\[4pt]
C_k (w\lambda_k-1) + O((w\lambda_k-1)^2), & w>w_k,
\end{cases}
\]
where $C_k = \tfrac{6(1-\lambda_k)}{\gamma (w \lambda_k)^3 \kappa_k}>0$.
Hence $E_D$ serves as an order parameter for the loss of homogeneity.

\end{proof}

\subsection{Effective potential (Corollary~\ref{cor:eff_pot})}
\label{app:eff_pot-simplified}

\begin{proof}[Idea and construction]
Near the critical coupling $w_k = 1/(\alpha\lambda_k)$, the amplitude $a$ of the dominant eigenmode $u_k$
satisfies, up to cubic order,
\[
0 = (\alpha w\lambda_k - 1)a - \frac{\gamma w^3 \lambda_k^3}{6}\,\kappa_k\,a^3 + O(a^5),
\qquad
\kappa_k = \sum_i u_{k,i}^4 > 0.
\]
Rather than viewing this as a stand-alone algebraic relation,
we can interpret it as the stationarity condition of a scalar
\emph{energy function} $V(a;w)$ in the sense that
\[
-\frac{\partial V}{\partial a}(a;w) = (\alpha w\lambda_k - 1)a
 - \frac{\gamma w^3 \lambda_k^3}{6}\,\kappa_k\,a^3.
\]
Integrating in $a$ yields, up to fourth order,
\begin{equation}
\label{eq:landau-simple}
V(a;w)
= \frac{1 - \alpha w\lambda_k}{2}\,a^2
  + \frac{\gamma (w\lambda_k)^3}{24}\,\kappa_k\,a^4
  + O(a^6).
\end{equation}
The fixed points of the layer therefore correspond to the \emph{minima} of this energy landscape.

\paragraph{Interpretation.}
The quadratic coefficient in \eqref{eq:landau-simple} changes sign at $\alpha w\lambda_k = 1$:
\begin{itemize}
  \item For $\alpha w\lambda_k < 1$, it is positive and $V(a;w)$ has a single minimum at $a=0$,
        representing the homogeneous (oversmoothed) state.
  \item For $\alpha w\lambda_k > 1$, it becomes negative while the quartic term remains positive,
        turning $V$ into a symmetric double well with two minima at
        \[
        a_\star(w) = \sqrt{\frac{6(\alpha w\lambda_k-1)}{\gamma (w\lambda_k)^3\kappa_k}}
        + O((\alpha w\lambda_k-1)^{3/2}),
        \]
        corresponding to the two non-homogeneous stable states $x_\pm^\star = \pm a_\star u_k$.
\end{itemize}

\paragraph{Analogy with phase transitions.}
The energy \eqref{eq:landau-simple} has the same form as the
Landau free energy in mean-field models of ferromagnetism.
Here, the amplitude $a$ plays the role of the magnetization:
before the critical point ($w\lambda_k<1$) the energy has one minimum at $a=0$
(disordered phase), while after the transition ($w\lambda_k>1$)
two symmetric minima appear (spontaneous symmetry breaking).
The system “chooses” one minimum, breaking the $a\!\mapsto\!-a$ symmetry,
just as a magnet acquires a spontaneous orientation below its Curie temperature.

\paragraph{Summary.}
Equation~\eqref{eq:landau-simple} thus provides a compact and intuitive potential
whose minima coincide with the stable fixed points of the GNN layer.
Using $-\partial V/\partial a=0$ clarifies that these states correspond to
\emph{energy minima}, making the stability and the phase-transition analogy
transparent to readers from both physics and machine-learning backgrounds.
\end{proof}


\subsection{Neural Tangent Kernel Framework and Proof of Corollary \ref{cor:ntk}}
\label{app:ntk}
\paragraph{Hitchhicker's guide to NTK}:

This section serves as a ``Hitchhiker's Guide'' to the Neural Tangent Kernel (NTK), providing the minimal theoretical machinery needed to understand how bifurcation creates a topological learning prior. We first review the general framework introduced by Jacot et al. and then apply it to prove Corollary 4.1.

\subsubsection*{From Gradient Descent to Kernel Evolution}

To see how standard gradient descent becomes a kernel process, we move from parameter space to function space using the chain rule. Consider a network function $f_\theta$ trained to minimize a loss $\mathcal{L}(f)$ using gradient descent on parameters $\theta$. The parameter update follows the negative gradient:
\begin{equation}
    \frac{d\theta}{dt} = -\eta \nabla_\theta \mathcal{L} = -\eta (\nabla_f \mathcal{L}) \nabla_\theta f,
\end{equation}
where $\eta$ is the learning rate. We are interested in how the function output $f$ itself evolves. By the chain rule:
\begin{equation}
    \frac{df}{dt} = (\nabla_\theta f)^T \frac{d\theta}{dt}.
\end{equation}
Substituting the parameter update equation into this function update yields:
\begin{equation}
    \frac{df}{dt} = (\nabla_\theta f)^T \left( -\eta (\nabla_f \mathcal{L}) \nabla_\theta f \right) = -\eta \underbrace{(\nabla_\theta f)^T (\nabla_\theta f)}_{K} \nabla_f \mathcal{L}.
\end{equation}
Here, $K(x, x') = \langle \nabla_\theta f(x), \nabla_\theta f(x') \rangle$ is the \textbf{Neural Tangent Kernel (NTK)}. It acts as a similarity matrix that dictates how an update at data point $x$ affects the prediction at $x'$.

\subsubsection*{Rationalizing Learning Speeds via Mode Decomposition}

Why do some patterns learn faster than others? This is determined by the spectrum (eigenvalues) of the kernel $K$. Consider a standard Least-Squares regression task where the loss is $\mathcal{L} = \frac{1}{2} \|f - y\|^2$. The gradient with respect to function outputs is simply the error residual: $\nabla_f \mathcal{L} = (f - y)$.

The dynamics equation becomes a linear differential equation:
\begin{equation}
    \frac{d(f-y)}{dt} = -\eta K (f-y).
\end{equation}
We can solve this by decomposing the error into the eigenbasis of the kernel $K$. Let $(v_r, \lambda_r^{(K)})$ be the eigenvector-eigenvalue pairs of $K$, such that $K v_r = \lambda_r^{(K)} v_r$. If we write the residual vector as a sum of these modes, $(f-y)(t) = \sum_r c_r(t) v_r$, the evolution decouples:
\begin{equation}
    \frac{d c_r}{dt} = -\eta \lambda_r^{(K)} c_r \quad \implies \quad c_r(t) = c_r(0) e^{-\eta \lambda_r^{(K)} t}.
\end{equation}
This reveals a fundamental spectral bias: the convergence rate for the $r$-th mode is proportional to its kernel eigenvalue $\lambda_r^{(K)}$. Modes with large $\lambda_r^{(K)}$ are learned exponentially fast, while modes with small $\lambda_r^{(K)}$ are learned very slowly.

\paragraph{Proof of Corollary}
\begin{proof}
We consider the two regimes separately.

\paragraph{Supercritical regime ($w>w_k$).}
Let $\mu := \alpha w \lambda_k - 1$. By Theorem~3.2, for $0<\mu\ll 1$ the dynamics admit exactly two stable nonzero fixed points
\begin{equation}
x^\star_\pm(w) = \pm a^\star(w) u_k + O\!\left((a^\star(w))^3\right),
\end{equation}
where
\begin{equation}
a^\star(w)
= \sqrt{\frac{6\mu}{\gamma (w\lambda_k)^3 \kappa_k}}
+ O(\mu^{3/2}),
\qquad
\kappa_k := \sum_i u_{k,i}^4.
\end{equation}

Differentiating the fixed-point expansion with respect to $w$ yields
\begin{equation}
\partial_w x^\star_\pm(w)
= \pm a^{\star\prime}(w) u_k + O(1),
\end{equation}
since $(a^\star(w))^3 = O(\mu^{3/2})$ and $\partial_w \mu = \alpha \lambda_k$.

From the leading-order expression for $a^\star(w)$ we obtain
\begin{equation}
a^{\star\prime}(w)
= \Theta(\mu^{-1/2}) + O(1),
\end{equation}
and therefore
\begin{equation}
\partial_w x^\star_\pm(w)
= \frac{c}{\sqrt{\mu}} u_k + O(1)
\end{equation}
for some constant $c>0$.

Forming the NTK,
\begin{equation}
K_w(i,j)
= \partial_w x^\star(i)\,\partial_w x^\star(j)
= \frac{c^2}{\mu} u_k(i) u_k(j) + O(1),
\end{equation}
which proves the first claim.

\paragraph{Subcritical regime ($w<w_k$).}
For $w<w_k$, the homogeneous fixed point $x^\star=0$ is locally stable (Theorem~3.2).
Using the activation expansion $\phi(z)=\alpha z+O(z^3)$, the linearized update map is
\begin{equation}
x \mapsto \alpha w A x,
\end{equation}
whose spectral decomposition is given by the eigenpairs $(\lambda_r,u_r)$ of $A$.

Since $|\alpha w \lambda_r|<1$ for all $r$, sensitivities remain bounded and are governed by the resolvent $(I-\alpha w A)^{-1}$.
The NTK therefore scales as the square of this resolvent:
\begin{equation}
K_w \;\propto\; (I-\alpha w A)^{-2}.
\end{equation}
Diagonalizing $A$ yields
\begin{equation}
K_w(i,j)
\;\propto\;
\sum_r \frac{1}{(1-\alpha w \lambda_r)^2} u_r(i) u_r(j),
\end{equation}
which completes the proof.
\end{proof}

\subsection{Full Theorem}
\begin{theorem}[Bifurcation on realistic GNNs]
\label{app:variance-bifurcation}
Consider the discrete layer update
\begin{equation}\label{eq:sin-map}
    X^{(\ell+1)} = \phi\!\big(A\, X^{(\ell)}\, W\big),
\end{equation}
where
$A\in\mathbb{R}^{n\times n}$ is the normalized graph adjacency matrix,
$W\in\mathbb{R}^{d\times d}$ is a random feature-weight matrix.
The entries of $W$ are i.i.d.\ with
$\mathbb{E}[W_{ij}]=0$ and $\mathrm{Var}(W_{ij})=v$, and $\phi \in C^3(\mathbb{R})$ an odd function which acts entry-wise with $\phi'(0)=\alpha>0$ and $\phi'''(0)=-\gamma<0$.
Define the vectorized variable $x=\mathrm{vec}(X)\in\mathbb{R}^{nd}$ and the matrix $\tilde{A} = W^\top \!\otimes A$, so that the update can be written equivalently as $x^{(\ell+1)} = \phi(\tilde{A}\,x^{(\ell)}),$

Let $\lambda_{k}$ be the largest eigenvalue of $A$,
and let $\rho(W)$ denote the spectral radius of $W$. Then:
\vspace{-1em}
\begin{enumerate}
    \item The homogeneous fixed point $x=0$
    is locally stable if and only if all the eigenvalues of $\tilde{A}$
    lie in the open unit disk
    and \emph{loses stability} when
    \vspace{-0.5em}
    \begin{equation}\label{eq:bifurcation-cond}
        \alpha \rho(W)\,\lambda_{k} = 1.
    \end{equation}
    
    At this critical value, the system undergoes its first
    (supercritical) pitchfork bifurcation.
    
    \item For $\alpha \rho(W)\,\lambda_{k}>1$ but close to~$1$,
    two stable, nontrivial symmetry-related equilibria emerge.
    Each corresponds to a rank-one activation pattern aligned
    with a graph mode $u_k$ and a feature mode $v_j$:
    \vspace{-0.5em}
    \[
    X^\star_{\pm,k,j}
    = \pm\,a^\star_{k,j}\,u_k v_j^\top
      + O\!\big((a^\star_{k,j})^3\big),
    \]
    
    where $u_k$ (resp. $v_j$) is the eigenvector associated to the leading eigenvalue $\lambda_k$ of $A$ (resp. $\sigma_j$ of $W$).
    The amplitude of the pattern satisfies
    \vspace{-0.5em}
    \[
    a^\star_{k,j}
    =\sqrt{\frac{6\,[\,\alpha\lambda_k |\sigma_j|-1\,]}{
    \gamma(\lambda_k\sigma_j)^3\,\kappa_k\,\xi_j}}
    +O\!\big((\alpha \lambda_k|\sigma_j|-1)^{3/2}\big),
    \]
    \vspace{-0.5em}
    with
    $\kappa_k=\sum_i u_{k,i}^4$ and
    $\xi_j=\sum_i v_{j,i}^4$.
\end{enumerate}
\vspace{-0.5em}
\end{theorem}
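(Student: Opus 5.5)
The plan is to reduce Theorem~\ref{app:variance-bifurcation} to the scalar-coupling argument already carried out for Theorem~\ref{thm:pitchfork_general}. Vectorize the update as $x^{(\ell+1)}=\phi(\tilde A x^{(\ell)})$ with $\tilde A = W^\top\otimes A$. Since $\phi$ acts entrywise, vectorization commutes with it. The linearization at $x=0$ is $\alpha\tilde A$. Its spectrum is $\{\alpha\lambda_r\sigma_j\}$, where the eigenvalues of a Kronecker product are the products of the factors' eigenvalues. Hence $0$ is locally stable iff $\rho(\alpha\tilde A)<1$. Because $A$ is symmetric with largest eigenvalue $\lambda_k$ (assumed to dominate $|\lambda_r|$ in modulus), this spectral radius equals $\alpha\lambda_k\rho(W)$, which gives part~1 and the threshold \eqref{eq:bifurcation-cond}.

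For part~2, I will assume that the eigenvalue $\sigma_j$ of $W$ attaining $\rho(W)$ is real and simple and that there is no other eigenvalue of equal modulus. Then the critical eigenvalue $\lambda_k\sigma_j$ of $\tilde A$ is simple. The case $\sigma_j<0$ is handled by replacing the pitchfork with a period-doubling analysis, or by restricting to $\sigma_j>0$; I will treat $\sigma_j>0$ in detail. The critical eigenvector is $v_j\otimes u_k=\mathrm{vec}(u_kv_j^\top)$. I then repeat Steps~1--5 of the proof of Theorem~\ref{thm:pitchfork_general} with $wA$ replaced by $\tilde A$. Write $x=a\,(v_j\otimes u_k)+y$, with $P$ the spectral projection onto the critical mode. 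Since $W$ need not be symmetric, $P$ is the oblique projection using the left eigenvector $\tilde v_j$ of $W$, and $Q=I-P$. Solve the transverse equation by contraction, using invertibility of $I-\alpha Q\tilde AQ$, which holds because all other eigenvalues have modulus $<1$ near threshold; this gives $y=O(a^3)$. The reduced equation then reads
\[
G(a)=(\alpha\lambda_k\sigma_j-1)a-\frac{\gamma(\lambda_k\sigma_j)^3}{6}\,c_{kj}\,a^3+O(a^5).
\]
The cubic coefficient $c_{kj}$ is the pairing of the left eigenvector with $(v_j\otimes u_k)^{\odot3}$. Since $(v_j\otimes u_k)^{\odot 3}=v_j^{\odot3}\otimes u_k^{\odot3}$ factorizes, it gives $c_{kj}=\kappa_k\,\langle \tilde v_j,v_j^{\odot3}\rangle$, which equals $\kappa_k\xi_j$ when $v_j$ is normalized and orthogonal-compatible (e.g. $W$ normal). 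Solving $H=0$ gives the stated amplitude $a^\star_{k,j}$, and oddness of $\phi$ gives the $\pm$ pair.

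Stability follows as in Step~5. The Jacobian at $x^\star$ is $\mathrm{diag}(\phi'(\tilde Ax^\star))\tilde A$. Its critical multiplier is $1-2\mu+O(\mu^2)$ with $\mu=\alpha\lambda_k\sigma_j-1$. The remaining multipliers are perturbations of size $O(a_\star^2)$ of $\alpha\lambda_r\sigma_i$, which are bounded away from the unit circle, so they stay inside it.

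The main obstacle is the non-normality of $W$ together with its random spectrum. The projection onto the critical mode must use left/right eigenvectors, so $\xi_j$ appears cleanly only under a normality or symmetry assumption. A Ginibre $W$ also generically has a complex leading eigenvalue, in which case the bifurcation is Neimark--Sacker rather than pitchfork. I would first state part~2 under the explicit hypothesis that the dominant eigenvalue of $W$ is real, positive and simple, for instance for symmetric $W$. I would then remark that the general formula holds with $\xi_j$ replaced by $\langle\tilde v_j, v_j^{\odot3}\rangle/\langle\tilde v_j,v_j\rangle$, and that positivity of this quantity must be assumed to ensure supercriticality.
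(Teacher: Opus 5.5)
This is essentially the paper's argument, carried out with more care: vectorize, read off the Kronecker spectrum, reduce onto the rank-one mode $u_kv_j^\top$, and balance the linear term against the cubic one. The hypotheses you add (real, positive, simple dominant $\sigma_j$, and normal $W$ so that the cubic coefficient is $\kappa_k\xi_j$) are what the paper's Frobenius-inner-product projection tacitly needs, and you check the multipliers where the paper's multi-feature proof only asserts stability. One labeling fix in your non-normal remark: since $\tilde A=W^\top\otimes A$, the critical right eigenvector $v_j\otimes u_k$ requires $W^\top v_j=\sigma_j v_j$ (a left eigenvector of $W$), while the dual vector in the oblique projection must satisfy $W\tilde v_j=\sigma_j\tilde v_j$ (a right eigenvector of $W$), so the two roles are reversed relative to your labels; the paper makes the same transposition slip in its Step~4.
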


Theorem \ref{thm:variance-bifurcation} is a non-trivial extension of our 1D result, proving the bifurcation mechanism is not merely an artifact of a simplified system. The bifurcation condition in \eqref{eq:bifurcation-cond} reflects an elegant interaction between the dominant spatial frequency of the graph ($\lambda_{k}$) and the dominant feature-mixing factor ($\rho(W)$). When their product exceeds 1, the homogeneous fixed point becomes unstable.

Crucially, emergent stable states are no longer simple vectors; they are \emph{rank-one tensors} of the form $u_k v_j^\top$. This has a clear and powerful interpretation: the emerging stable pattern is a separable state composed of a specific \emph{graph eigenmode} ($u_k$) and a corresponding \emph{feature co-activation pattern} ($v_j$). In other words, the GNN's dynamics naturally stabilize a pattern that is simultaneously aligned with the graph structure and the learned feature space.

\subsection{Proof of Theorem~\ref{thm:variance-bifurcation}}
\label{app:main_real}
We prove that the matrix-valued update
\[
X^{(\ell+1)} = \phi\!\big(A X^{(\ell)} W\big)
\]
undergoes a first bifurcation when $\alpha\rho(W)\lambda_{k}(A)=1$, and that the
emerging equilibria are rank-one patterns $u_k v_j^\top$ with amplitude given
in the statement.

\paragraph{Step 1: Vectorized form and linearization.}
Let $x = \mathrm{vec}(X)\in\mathbb{R}^{nd}$ and recall the identity
\[
\mathrm{vec}(A X W) = (W^\top \otimes L)\,\mathrm{vec}(X).
\]
Define
\[
\tilde{A} := W^\top \otimes A \in \mathbb{R}^{nd \times nd}.
\]
Then the layer update can be written entrywise as
\[
x^{(\ell+1)} = \phi(\tilde{A} x^{(\ell)}),
\]
with $\phi$ acting componentwise. To study the stability of $x=0$, expand
$\phi (z) = \alpha z + O(z^3)$ to get the linearized map
\[
x^{(\ell+1)} = \alpha \tilde{A} x^{(\ell)} + O(\|x^{(\ell)}\|^3).
\]
Therefore $x=0$ is locally stable iff the spectral radius of $\tilde{A}$ is $<1/\alpha$.

\paragraph{Step 2: Spectrum of the Kronecker product matrix.}
Let $(\lambda_i, u_i)$, $i=1,\dots,n$, be the eigenpairs of $A$ and
let $(\sigma_m, v_m)$, $m=1,\dots,d$, be the eigenpairs of $W$.
Then the eigenvalues of $\tilde{A} = W^\top \otimes A$ are exactly the products
\[
\lambda_i \sigma_m, \qquad i=1,\dots,n,\ m=1,\dots,d,
\]
and an associated eigenvector is $v_m \otimes u_i$.
Hence,
\[
\rho(A) = \max_{i,m} |\lambda_i \sigma_m|
        = \lambda_{k}\, \rho(W).
\]
Thus, the linearization loses stability precisely when
\[
\alpha \lambda_{k}\, \rho(W) = 1,
\]
which proves the first part of the theorem.

\paragraph{Step 3: Isolating the critical 2D structure.}
At the critical point, there is at least one pair of leading eigenmodes $(k,j)$ such that
\[
|\alpha\lambda_k \sigma_j| = 1 \quad\text{and}\quad \lambda_k = \lambda_{k}(A),\ |\sigma_j|=\rho(W).
\]
Fix such a pair and set
\[
u := u_k \in \mathbb{R}^n, \qquad v := v_j \in \mathbb{R}^d,
\]
both normalized.
We will show that the corresponding nonlinear equilibrium has the form
\[
X^\star = a\, u v^\top + \text{higher-order terms},
\]
with $a$ small.

\paragraph{Step 4: Plugging a rank-one ansatz.}
Write
\[
X = a\, u v^\top + Y,
\qquad \langle Y, u v^\top \rangle_F = 0,
\]
i.e. $Y$ is orthogonal (in Frobenius inner product) to the main rank-one direction.
Apply $A(\cdot)W$:
\[
A X W = A(a u v^\top + Y) W
      = a (A u) (v^\top W) + A Y W
      = a (\lambda_k u) (\sigma_j v)^\top + A Y W,
\]
because $Au=\lambda_k u$ and $W v = \sigma_j v$.
So the “leading part” of $A X W$ is the rank-one matrix
\[
a\, \lambda_k \sigma_j\, u v^\top,
\]
plus the transverse contribution $A Y W$.

\paragraph{Step 5: Entrywise expansion of the nonlinearity.}
Apply $\phi(\cdot)$ entrywise:
\[
\phi(A X W)
= \phi\big(a\,\lambda_k \sigma_j\, u v^\top + A Y W\big)
= \phi\big(a\,\lambda_k \sigma_j\, u v^\top\big)
  + \text{higher-order transverse terms}.
\]
Since $\phi (z) = \alpha z - \frac{\gamma}{6} z^3 + O(z^5)$ entry-wise, we have for each entry $(i,j)$:
\[
\phi\big(a\,\lambda_k \sigma_j\, u_i v_j\big)
=  a\alpha\,\lambda_k \sigma_j\, u_i v_j
  - \frac{\gamma (a\,\lambda_k \sigma_j)^3}{6}\, u_i^3 v_j^3
  + O(a^5).
\]
Hence the $(i,j)$ entry of the updated matrix is
\[
X'_{ij}
=  a\alpha\,\lambda_k \sigma_j\, u_i v_j
  - \frac{\gamma (a\,\lambda_k \sigma_j)^3}{6}\, u_i^3 v_j^3
  + \text{(terms involving $Y$)} + O(a^5).
\]

\paragraph{Step 6: Projecting back onto the main rank-one direction.}
To get the scalar equation for $a$, we project onto $u v^\top$ using the Frobenius inner product:
\[
\langle A, u v^\top \rangle_F = \sum_{i,j} A_{ij} u_i v_j.
\]
Apply this to $X' = \phi(A X W)$, ignoring for now the transverse $Y$-terms (which are $O(a^3)$ and can be absorbed as before):
\begin{align*}
\langle X', u v^\top \rangle_F
&= \sum_{i,j} \Bigl(
      a\alpha\,\lambda_k \sigma_j\, u_i v_j
      - \frac{\gamma (a\,\lambda_k \sigma_j)^3}{6}\, u_i^3 v_j^3
    \Bigr) u_i v_j
    + O(a^5)\\
&= a\alpha\,\lambda_k \sigma_j \sum_{i,j} u_i^2 v_j^2
   - \frac{\gamma (a\,\lambda_k \sigma_j)^3}{6} \sum_{i,j} u_i^4 v_j^4
   + O(a^5).
\end{align*}
Because $u$ and $v$ are unit-norm,
\[
\sum_i u_i^2 = 1, \qquad \sum_j v_j^2 = 1
\quad\Longrightarrow\quad
\sum_{i,j} u_i^2 v_j^2 = 1.
\]
Define
\[
\kappa_k := \sum_i u_i^4, \qquad \xi_j := \sum_j v_j^4.
\]
Then
\[
\sum_{i,j} u_i^4 v_j^4 = \Bigl(\sum_i u_i^4\Bigr)\Bigl(\sum_j v_j^4\Bigr)
= \kappa_k \xi_j.
\]
Therefore
\[
\langle X', u v^\top \rangle_F
= a\alpha\,\lambda_k \sigma_j
  - \frac{\gamma (a\,\lambda_k \sigma_j)^3}{6}\, \kappa_k \xi_j
  + O(a^5).
\]

\paragraph{Step 7: Fixed-point condition for the amplitude.}
A fixed point must satisfy $X' = X$, i.e.
\[
\langle X', u v^\top \rangle_F = \langle X, u v^\top \rangle_F = a.
\]
Hence
\[
a = a\,\alpha \lambda_k \sigma_j
    - \frac{\gamma (a\,\lambda_k \sigma_j)^3}{6}\, \kappa_k \xi_j
    + O(a^5).
\]
Rearrange:
\[
0 = (\alpha \lambda_k \sigma_j - 1) a
    - \frac{\gamma (\lambda_k \sigma_j)^3}{6}\, \kappa_k \xi_j\, a^3
    + O(a^5).
\]
This is the same cubic balance as in the 1D case.

\paragraph{Step 8: Solving for the amplitude.}
Set
\[
\mu := \alpha \lambda_k |\sigma_j| - 1,
\]
and assume $\mu>0$ is small (i.e. we are just beyond the instability).
Ignoring the $O(a^5)$ term,
\[
(\alpha \lambda_k |\sigma_j| - 1) a
= \frac{\gamma ( \lambda_k \sigma_j)^3}{6}\, \kappa_k \xi_j\, a^3,
\]
which gives
\[
a^2
= \frac{6(\alpha \lambda_k |\sigma_j| - 1)}{
      \gamma (\lambda_k \sigma_j)^3 \kappa_k \xi_j}.
\]
Taking the square root yields
\[
a
= \pm \sqrt{\frac{6(\alpha \lambda_k |\sigma_j| - 1)}{
      \gamma (\lambda_k \sigma_j)^3 \kappa_k \xi_j}}
  \;+\; O\big((\alpha \lambda_k |\sigma_j|-1)^{3/2}\big),
\]
which is exactly the amplitude stated in the theorem.

\paragraph{Step 9: Rank-one structure and symmetry.}
Finally,
\[
X^\star_{\pm,k,\ell}
= \pm\, a^\star_{k,\ell}\, u_k v_j^\top
  + O\big((a^\star_{k,\ell})^3\big)
\]
comes from undoing the vectorization and restoring the transverse correction $Y=O(a^3)$, just as in the 1D proof. The two signs correspond to the odd nonlinearity and give symmetric stable branches of the supercritical pitchfork.

This completes the proof.
\qed

\subsection{Reminders of Random Matrix Theory}
\label{app:rmt}
We summarize the classical spectral results needed to determine the critical variance conditions in Corollary~\ref{cor:init}.

\paragraph{Non-symmetric (Ginibre) matrices.}
Let $W\in\mathbb{R}^{d\times d}$ be a non-symmetric matrix with i.i.d.\ real entries with mean $0$ and variance $v$, its eigenvalues are uniformly distributed in the complex disk of radius
\[
R = \sqrt{d v},
\]
known as the \emph{circular law}.
Therefore $\rho(W)\approx \sqrt{d v}$.

\subsection{Proof of Corollary~\ref{cor:init} (bifurcation-aware initialization)}
Corollary~\ref{cor:init} is a direct consequence of Theorem~\ref{thm:variance-bifurcation}
and the random matrix estimates recalled above.

\paragraph{Step 1 (bifurcation condition).}
Theorem~\ref{thm:variance-bifurcation} shows that the trivial state $X=0$
of the layer
\[
X^{(\ell+1)} = \phi(A X^{(\ell)} W)
\]
loses stability when
\begin{equation}
\label{eq:crit-cond-cor}
\alpha \rho(W)\,\lambda_{k} = 1.
\end{equation}
Equivalently,
\[
\rho(W) = \frac{1}{\alpha \lambda_{k}}.
\]

\paragraph{Step 2 (plug in random-matrix scaling).}
From the random matrix reminders~\ref{app:rmt}:
$W$ is \emph{non-symmetric} (Ginibre-type) with i.i.d.\ entries
        of variance $v$, then $\rho(W) \approx \sqrt{d v}$.
        Plugging this into \eqref{eq:crit-cond-cor} gives
        \[
        \sqrt{d v}\, \alpha \lambda_{k} = 1
        \quad\Longrightarrow\quad
        v = \frac{1}{d \alpha^2 \lambda_{k}^2}.
        \]
        
These are exactly the two variance formulas stated in the corollary.

\paragraph{Step 3 (interpretation for initialization).}
Choosing the variance $v$ of the feature weight matrix $W$ at (or just above)
these critical values means that already at initialization the linear part of
the layer sits \emph{on} the instability boundary:
\[
\alpha \rho(W)\,\lambda_{k} \gtrsim 1.
\]
By Theorem~\ref{thm:variance-bifurcation}, this is precisely the regime where
nontrivial, rank-one graph–feature patterns
\[
X^\star_{\pm,k,\ell} \approx \pm\, a^\star_{k,\ell} \, u_k v_j^\top
\]
exist and are stable.
In other words, the (deep) GNN does \emph{not} start from a fully
oversmoothed, constant signal: it already has a small but structured component
along the leading Laplacian eigenmode $u_k$, modulated by the leading
feature direction $v_j$.
This is why we refer to it as a \emph{bifurcation-aware} initialization.

This completes the proof.
\qed

\section{Experimental details}
\label{app-detailExperiments}

Table \ref{tab:dataset_stats} summarizes the key properties of the benchmark datasets used in our experiment, where $h\%$ denotes the graph homophily ratio and Avg.\ N.D.\ indicates the average node degree.

\begin{table}[t]
\centering
\caption{Statistics of the benchmark datasets.}
\label{tab:dataset_stats}
\begin{tabular}{lrrrrrrrrr}
\toprule
Dataset & \#Nodes & \#Edges & \#Features & \#Classes & $h\%$ & Avg.\ N.D. & Diameter & Class Type \\
\midrule
Cora        & 2{,}708  & 5{,}278   & 1{,}433 & 7  & 80.4 & 3.9  & $\sim$19 & Citation networks \\
Citeseer   & 3{,}327  & 4{,}676   & 3{,}703 & 6  & 73.5 & 2.8  & $\sim$16 & Citation networks \\
Pubmed     & 19{,}717 & 44{,}327  & 500     & 3  & 80.2 & 4.5  & $\sim$21 & Citation networks \\
Computers  & 13{,}752 & 245{,}861 & 767     & 10 & 77.7 & 35.8 & $\sim$14 & Co-purchase networks \\
Photo      & 7{,}650  & 119{,}081 & 745     & 8  & 82.7 & 31.1 & $\sim$12 & Co-purchase networks \\
CoauthorCS & 18{,}333 & 81{,}894  & 6{,}805 & 15 & 80.0 & 8.9  & $\sim$15 & Co-authorship networks \\
Texas      & 183      & 295       & 1{,}703 & 5  & 0.11 & 3.2  & $\sim$9  & WebKB pages \\
Wisconsin  & 251      & 466       & 1{,}703 & 5  & 0.21 & 3.7  & $\sim$9  & WebKB pages \\
Cornell    & 183      & 280       & 1{,}703 & 5  & 0.30 & 3.1  & $\sim$8  & WebKB pages \\
Squirrel   & 5{,}201  & 198{,}493 & 2{,}089 & 5  & 0.22 & 76.3 & $\sim$11 & Wikipedia pages \\
Chameleon  & 2{,}277  & 31{,}421  & 2{,}325 & 5  & 0.23 & 27.6 & $\sim$10 & Wikipedia pages \\
\bottomrule
\end{tabular}
\end{table}

\textbf{Baselines.} We compare our approach against a diverse set of state-of-the-art baselines, including  GraphSAGE \citep{hamilton2017inductive}, GCN \citep{kipf2017semi},  GAT \citep{velivckovic2018graph}, CGNN \citep{pmlr-v119-xhonneux20a}, GRAND \citep{chamberlain2021grand}, GREAD \citep{pmlr-v202-choi23a}, and SGOS-Expn \citep{SGOS-10.1145}. These methods cover a broad range of modelling paradigms, encompassing standard message-passing architectures (GCN, GraphSAGE), attention-based models (GAT), normalization techniques for mitigating oversmoothing (PairNorm), continuous-depth formulations based on neural ODEs and diffusion dynamics (CGNN, GRAND, GREAD), as well as recent spectral optimization approaches (SGOS-Expn). All baseline results are taken from \citet{SGOS-10.1145} to ensure a fair comparison in identical experimental settings and data splits.

\textbf{Hyperparameter Settings.} We optimize hyperparameters for each model using random search over 50 configurations. The search space includes a hidden dimension in $\{64, 128, 256\}$, dropout rate in $[0, 0.7]$, learning rate in $[10^{-3.5}, 10^{-1.5}]$, weight decay in $[10^{-5}, 10^{-1.5}]$, network depth $\in \{2, 3, 4, 5\}$, and polynomial filter order $K \in \{1, 2, 3\}$. We select the configuration with the highest validation accuracy and report the corresponding test performance. Table~\ref{tab:hyperparams_sine_relu_polyA} provides the optimal network depths and polynomial filter order $K$ for each dataset and for the proposed architectures.

\begin{table*}[t]
\centering
\caption{\textbf{Best hyperparameters $(\text{depth}, K)$ for polynomial-$A$ models.}
}
\label{tab:hyperparams_sine_relu_polyA}
\vspace{0.3em}
\resizebox{\textwidth}{!}{
\begin{tabular}{l|ccccccccccc}
\toprule
\textbf{Method} & \textbf{Texas} & \textbf{Wisconsin} & \textbf{Squirrel} & \textbf{Chameleon} & \textbf{Cornell} & \textbf{Computer} & \textbf{CiteSeer} & \textbf{PubMed} & \textbf{Cora} & \textbf{CoauthorCS} & \textbf{Photo} \\
\midrule
Sine-Poly(A)-sh & $(3,2)$ & $(3,2)$ & $(4,1)$ & $(4,1)$ & $(3,2)$ & $(4,2)$ & $(2,2)$ & $(4,2)$ & $(3,1)$ & $(2,2)$ & $(3,2)$ \\
Sine-Poly(A)-pl & $(4,2)$ & $(3,2)$ & $(4,1)$ & $(4,2)$ & $(4,2)$ & $(4,1)$ & $(2,2)$ & $(3,2)$ & $(3,2)$ & $(3,2)$ & $(3,2)$ \\
ReLU-Poly(A)-sh & $(4,2)$ & $(2,1)$ & $(2,1)$ & $(2,1)$ & $(2,2)$ & $(3,1)$ & $(2,2)$ & $(3,2)$ & $(3,2)$ & $(3,1)$ & $(3,2)$ \\
ReLU-Poly(A)-pl & $(4,1)$ & $(3,2)$ & $(3,1)$ & $(3,1)$ & $(2,2)$ & $(2,2)$ & $(3,2)$ & $(3,2)$ & $(4,2)$ & $(4,2)$ & $(3,1)$ \\
\bottomrule
\end{tabular}
}
\end{table*}

\textbf{Training.} Models are trained using the Adam optimizer with early stopping and a patience of 100 epochs, where validation performance is evaluated every 10 epochs. For all datasets, we adopt random $60\%/20\%/20\%$ train/validation/test splits and report the mean $\pm$ standard deviation over 10 runs. 

\textbf{Implementation.} All experiments are conducted on a server equipped with NVIDIA H100 GPUs (CUDA 12.6) with 95\,GB of memory, enabling efficient training and evaluation on high-complexity graphs. The implementation and related resources will be made publicly available upon acceptance of the paper.


\begin{table}[t]
\centering
\caption{\textbf{Average rank ($\pm$ std) across all benchmark datasets.} Methods are ranked by accuracy on each dataset (1 = best), and the average rank is computed across all datasets except large-scale OGBN-arXiv. Lower is better.}
\label{tab:mean_rank_simple_sorted_by_accuracy}
\begin{tabular}{lc}
\toprule
\textbf{Method} & \textbf{Mean Rank} \\
\midrule

Sine-Poly(A)-sh & 3.00 $\pm$ 2.13 \\
Sine-Poly(A)-pl & 3.18 $\pm$ 3.31 \\
GELU-Poly(A)-pl & 4.82 $\pm$ 1.78 \\
GELU-Poly(A)-sh & 5.32 $\pm$ 2.56 \\
ReLU-Poly(A)-pl & 6.64 $\pm$ 2.41 \\
ReLU-Poly(A)-sh & 6.91 $\pm$ 3.56 \\
Swish-Poly(A)-pl & 7.36 $\pm$ 2.26 \\
Swish-Poly(A)-sh & 7.59 $\pm$ 2.35 \\
SGOS-Expn & 7.73 $\pm$ 4.76 \\
GREAD & 8.45 $\pm$ 4.46 \\
Sine GCN & 10.73 $\pm$ 6.08 \\
GraphSAGE & 12.00 $\pm$ 2.65 \\
GRAND & 12.45 $\pm$ 3.70 \\
GCN & 12.91 $\pm$ 1.97 \\
CGNN & 13.45 $\pm$ 1.81 \\
GAT & 14.82 $\pm$ 1.60 \\
PairNorm & 15.13 $\pm$ 2.17 \\
\bottomrule
\end{tabular}
\end{table}

\begin{table*}[t]
\centering
\caption{\textbf{Activation ablation. } Node classification accuracy (\%) on benchmark datasets without polynomial filtering and for different base activation functions. GELU and Swish alternatives are added.
Results are reported as mean $\pm$ standard deviation across multiple runs.
Datasets are ordered by increasing homophily levels (Hom.), where higher values indicate more homophilic graphs.}
\label{tab:ablation_full}
\vspace{0.3em}
\resizebox{\textwidth}{!}{
\begin{tabular}{l|ccccc|cccccc}
\toprule
\textbf{Method} & \textbf{Texas} & \textbf{Wisconsin} & \textbf{Squirrel} & \textbf{Chameleon} & \textbf{Cornell} & \textbf{Citeseer} & \textbf{Computers} & \textbf{Pubmed} & \textbf{Cora} & \textbf{CoauthorCS} & \textbf{Photo} \\
\midrule
\textit{Hom. level} & \textit{0.11} & \textit{0.21} & \textit{0.22} & \textit{0.23} & \textit{0.30} & \textit{0.74} & \textit{0.78} & \textit{0.80} & \textit{0.81} & \textit{0.81} & \textit{0.83} \\
\midrule
Sine-Poly(A)-sh & 90.09 $\pm$ 0.1 & 85.62 $\pm$ 0.0 & 60.81 $\pm$ 0.56 & 70.81 $\pm$ 0.24 & 76.58 $\pm$ 0.1 & 78.18 $\pm$ 1.01 & 91.56 $\pm$ 0.32 & 89.71 $\pm$ 0.09 & 89.78 $\pm$ 0.20 & 95.84 $\pm$ 0.10 & 95.69 $\pm$ 0.17 \\
Sine-Poly(A)-pl & 93.69 $\pm$ 0.2 & 80.39 $\pm$ 0.0 & 61.79 $\pm$ 0.40 & 70.85 $\pm$ 0.92 & 76.87 $\pm$ 0.1 & 78.15 $\pm$ 0.75 & 91.94 $\pm$ 0.07 & 88.67 $\pm$ 0.27 & 89.89 $\pm$ 0.50 & 95.42 $\pm$ 0.07 & 95.55 $\pm$ 0.21 \\

\midrule

Sine GCN & 52.68 $\pm$ 1.19 & 50.57 $\pm$ 2.77 & 29.25 $\pm$ 0.55 & 44.27 $\pm$ 1.47 & 40.50 $\pm$ 2.92 & 78.12 $\pm$ 0.38 & 91.25 $\pm$ 0.10 & 88.38 $\pm$ 0.18 & 89.85 $\pm$ 0.19 & 94.15 $\pm$ 0.12 & 94.50 $\pm$ 0.35 \\

\midrule

ReLU-Poly(A)-sh & 92.79 $\pm$ 0.1 & 86.27 $\pm$ 0.1 & 53.57 $\pm$ 0.82 & 69.63 $\pm$ 0.35 & 67.57 $\pm$ 0.2 & 77.56 $\pm$ 0.55 & 91.80 $\pm$ 0.43 & 88.36 $\pm$ 0.14 & 89.36 $\pm$ 0.18 & 93.42 $\pm$ 0.14 & 94.37 $\pm$ 0.25 \\
ReLU-Poly(A)-pl & 90.09 $\pm$ 0.1 & 84.31 $\pm$ 0.0 & 53.67 $\pm$ 0.46 & 69.89 $\pm$ 0.49 & 76.58 $\pm$ 0.1 & 77.53 $\pm$ 0.55 & 91.85 $\pm$ 0.26 & 88.46 $\pm$ 0.19 & 89.47 $\pm$ 0.21 & 93.33 $\pm$ 0.19 & 94.09 $\pm$ 0.27 \\

\midrule \midrule

GELU-Poly(A)-sh & 90.09 $\pm$ 1.27 & 83.65 $\pm$ 0.89 & 53.74 $\pm$ 0.44 & 67.61 $\pm$ 0.27 & 77.47 $\pm$ 3.01 & 77.74 $\pm$ 0.25 & 92.62 $\pm$ 0.19 & 88.77 $\pm$ 0.24 & 88.87 $\pm$ 0.68 & 93.51 $\pm$ 0.06 & 94.88 $\pm$ 0.31 \\
GELU-Poly(A)-pl & 91.89 $\pm$ 1.15 & 85.53 $\pm$ 1.78 & 54.77 $\pm$ 0.83 & 67.32 $\pm$ 0.18 & 76.57 $\pm$ 3.59 & 77.60 $\pm$ 0.07 & 92.38 $\pm$ 0.43 & 88.85 $\pm$ 0.16 & 89.11 $\pm$ 0.52 & 93.50 $\pm$ 0.07 & 95.06 $\pm$ 0.24 \\

\midrule

Swish-Poly(A)-sh & 89.18 $\pm$ 2.34 & 84.97 $\pm$ 1.02 & 54.25 $\pm$ 0.46 & 67.18 $\pm$ 0.54 & 67.56 $\pm$ 2.24 & 77.25 $\pm$ 0.25 & 91.76 $\pm$ 0.69 & 88.77 $\pm$ 0.07 & 88.93 $\pm$ 0.67 & 93.41 $\pm$ 0.07 & 94.99 $\pm$ 0.14 \\
Swish-Poly(A)-pl & 90.09 $\pm$ 1.07 & 83.02 $\pm$ 4.08 & 53.61 $\pm$ 0.16 & 67.97 $\pm$ 0.53 & 73.67 $\pm$ 3.25 & 77.55 $\pm$ 0.14 & 92.51 $\pm$ 0.62 & 88.66 $\pm$ 0.07 & 88.87 $\pm$ 0.52 & 93.37 $\pm$ 0.02 & 94.80 $\pm$ 0.05 \\

\bottomrule
\end{tabular}
}
\vspace{-0.5em}
\end{table*}



\end{document}